\documentclass[lettersize,journal]{IEEEtran}
\usepackage{amsmath,amsfonts}
\usepackage{algorithmic}
\usepackage{algorithm}
\usepackage{array}
\usepackage[caption=false,font=normalsize,labelfont=sf,textfont=sf]{subfig}
\usepackage{textcomp}
\usepackage{stfloats}
\usepackage{url}
\usepackage{verbatim}
\usepackage{graphicx}
\usepackage{cite}

\usepackage{witharrows}
\usepackage{hyperref}       % hyperlinks
\usepackage{url}            % simple URL typesetting
\usepackage{booktabs}       % professional-quality tables
\usepackage{framed}
\usepackage{wrapfig}
\usepackage{url}
\usepackage{booktabs} % for professitemplateonal tables
\usepackage[shortlabels]{enumitem}
\setlist[enumerate]{nosep}
\newtheorem{theorem}{Theorem}

\newtheorem{proposition}{Proposition}
\newtheorem{lemma}{Lemma}
\newtheorem{proof}{Proof}%[section]\newtheorem*{Proof}{Proof}
\newtheorem{corollary}{Corollary}%[section]
\usepackage{amsfonts,amssymb}
\usepackage{multirow}

\usepackage{bm}
\usepackage{algorithm}
\usepackage{algorithmic}

\usepackage{tikz}
\usetikzlibrary{matrix}
\usepackage{enumitem}
\usepackage{xr}
\newtheorem{remark}{Remark}
\usepackage{amsmath}
\newcommand{\N}{\mathcal{N}}
\newcommand{\s}{\Sigma}
\newcommand{\e}{\epsilon}
\DeclareMathOperator{\Tr}{Tr}
\newcommand{\norm}[1]{\left\lVert #1 \right\rVert}

\begin{document}

\title{Optimal Stability of KL Divergence under Gaussian Perturbations}

\author{Jialu Pan, Yufeng Zhang, Nan Hu, Zhenbang Chen, Ji Wang, Keqin Li,~\IEEEmembership{Fellow,~IEEE,}
        % <-this % stops a space
\thanks{Jialu Pan, Yufeng Zhang, and Nan Hu are with the College of Computer Science and Electronic Engineering, Hunan University, Changsha, China.\\
	E-mail: jialupan@hnu.edu.cn, yufengzhang@hnu.edu.cn, hunan5@hnu.edu.cn}% <-this % stops a space
\thanks{Zhenbang Chen and Ji Wang are with National University of Defense Technology, China.\\
		E-mail: zbchen@nudt.edu.cn, wj@nudt.edu.cn}
\thanks{Keqin Li is with the Department of Computer Science, State University of New York at New Paltz, 1 Hawk Drive, New Paltz, NY 12561, USA. \\E-mail: lik@newpaltz.edu}
\thanks{Jialu Pan and Yufeng Zhang are co-first authors. Yufeng Zhang is the corresponding author.}
}

% The paper headers
\markboth{Journal of \LaTeX\ Class Files,~Vol.~14, No.~8, August~2021}%
{Shell \MakeLowercase{\textit{et al.}}: A Sample Article Using IEEEtran.cls for IEEE Journals}

\IEEEpubid{0000--0000/00\$00.00~\copyright~2021 IEEE}
% Remember, if you use this you must call \IEEEpubidadjcol in the second
% column for its text to clear the IEEEpubid mark.

\maketitle

\begin{abstract}
	We study the stability of Kullback-Leibler (KL) divergence under Gaussian perturbations. Existing relaxed triangle inequalities for KL divergence critically rely on the assumption that all involved distributions are Gaussian, which limits their applicability in modern applications such as out-of-distribution (OOD) detection with flow-based generative models. In this paper, we remove this restriction by establishing a sharp stability bound between an arbitrary distribution and Gaussian families under mild moment conditions. Specifically, let $P$ be a distribution with finite second moment, and let $\mathcal{N}_1$ and $\mathcal{N}_2$ be multivariate Gaussian distributions. We show that if $KL(P||\mathcal{N}_1)$ is large and $KL(\mathcal{N}_1||\mathcal{N}_2)$ is at most $\epsilon$, then	$KL(P||\mathcal{N}_2) \ge KL(P||\mathcal{N}_1) - O(\sqrt{\epsilon})$. Moreover, we prove that this $\sqrt{\epsilon}$ rate is optimal in general, even within the Gaussian family. This result reveals an intrinsic stability property of KL divergence under Gaussian perturbations, extending classical Gaussian-only relaxed triangle inequalities to general distributions. The result is non-trivial due to the asymmetry of KL divergence and the absence of a triangle inequality in general probability spaces. As an application, we provide a rigorous foundation for KL-based OOD analysis in flow-based models, removing strong Gaussian assumptions used in prior work. More broadly, our result enables KL-based reasoning in non-Gaussian settings arising in deep learning and reinforcement learning.
\end{abstract}

\begin{IEEEkeywords}
Kullback-Leibler divergence, multivariate Gaussian distribution, flow-based model, out-of-distribution detection
\end{IEEEkeywords}

\section{Introduction}
\IEEEPARstart{K}{ullback-Leibler} (KL) divergence (also referred to as relative entropy)~\cite{kullback1997information} is one of the most important statistical divergences that measure the discrepancy between probability distributions. To date, KL divergence has been widely applied in many fields, including deep learning \cite{PRML,goodfellow2016deep}, statistics \cite{pardo2018statistical}, and information theory \cite{cover2012elements}.
However, KL divergence lacks key metric properties such as symmetry and triangle inequality, making it difficult to reason about relationships among multiple distributions.

%The KL divergence between two continuous probability distributions $P$ and $Q$ is defined by $$KL(P||Q)=\int p(x)\log \frac{p(x)}{q(x)}\dif x$$
%From the definition, it is easy to know KL divergence is not a proper distance \cite{kullback1997information}. First, KL divergence is not symmetric since in general $p(x)\log (p(x)/q(x))\neq q(x)\log (q(x)/p(x))$. 
%When $KL(P||Q)$ is small, $KL(Q||P)$ can still be very large.
%Furthermore, KL divergence does not satisfy the triangle inequality.

Researchers have investigated the properties of KL divergence between various probability densities including 
multivariate Gaussian distributions \cite{zhang2023properties}, lattice Gaussian distributions \cite{nielsen2022kullback}, Gaussian Mixture Models \cite{low_upper_bound_kl_gmm, hershey2007approximating}, Markov sources \cite{rached2004kullback}, \textit{etc.}
Specially, Zhang \textit{et al.} investigated the properties of KL divergence between multivariate Gaussian distributions in \cite{zhang2023properties}. They prove that KL divergence between multivariate Gaussian distribution is approximately symmetric and satisfies a relaxed triangle inequality. In detail, the relaxed triangle inequality state that given three $n$-dimensional Gaussian distributions $\mathcal{N}_1$, $\mathcal{N}_2$, and $\mathcal{N}_3$, if $KL(\mathcal{N}_1||\mathcal{N}_2)\leq \epsilon_1$ and $KL(\mathcal{N}_2||\mathcal{N}_3)\leq \epsilon_2$ ($\epsilon_1,\epsilon_2\ge 0$), then $KL(\mathcal{N}_1||\mathcal{N}_3)<3\epsilon_1+3\epsilon_2+2\sqrt{\epsilon_1\epsilon_2}+o(\epsilon_1)+o(\epsilon_2)$.
Recently, Xiao \textit{et al.} improve the above bound to $\epsilon_1+\epsilon_2+2\sqrt{\epsilon_1\epsilon_2}+o(\epsilon_1)+o(\epsilon_2)$ in \cite{xiao2026relaxedtriangleinequalitykullbackleibler}.

The theoretical results reported in \cite{zhang2023properties,xiao2026relaxedtriangleinequalitykullbackleibler} can be applied in various domains including out-of-distribution detection, safe reinforcement learning \cite{liu2022constrained}, and sample complexity \cite{JACM-sample-complexity-GMM2020}.
In \cite{zhang2023outofdistribution}, Zhang \textit{et al.} analyze the KL divergences between several distributions under flow-based model \cite{glowopenai}. Based on the proved properties of KL divergence between Gaussian distributions, they explain the following counterintuitive phenomenon in flow-based models: \textit{flow-based model may assign comparable or even higher likelihoods to out-of-distribution (OOD) data, but one cannot sample out OOD data from the model}.  
Their explanation is based on the following key conclusion: 

\textit{The KL divergence between the distribution of OOD representations and prior is larger than that of ID representations and prior}. 

Furthermore, based on the conclusion, they also derive a KL divergence-based OOD detection algorithm (namely KLODS). 
The algorithm approximates the several distributions under flow-based model with multivariate Gaussian distributions and estimate the KL divergences between them with closed formula. Experimental results demonstrate the superiority of the proposed algorithm.
\IEEEpubidadjcol

%However, deriving the above key conclusion requires a strong assumption that neglects the approximation error in some case.
In prior work \cite{zhang2023outofdistribution}, Zhang \textit{et al.} derive the above key conclusion in two cases.
For the case when the representations of OOD data follow Gaussian-like distribution, they give a rigorous theoretical analysis based on the properties of KL divergences between Gaussian distributions proved in \cite{zhang2023properties}. 
However, such rigorous analysis does not apply to the general case where the distribution of OOD representations can be arbitrary.
For the general case, Zhang \textit{et al.} use a strong assumption $P_Z\approx P_Z^r$, where $P_Z$ is the distribution of representations of in-distribution (ID) data, $P_Z^r$ is the model prior. This is justified by the training objective of flow-based models, which minimizes the KL divergence between the learned latent distribution $P_Z$ and the Gaussian prior $P_Z^r$. However,  it is still worth asking whether the model fitting error can be safely ignored.

\textbf{Research Question}. Existing relaxed triangle inequalities for KL divergence is restricted to the case that all involved distributions are Gaussian.
However, this assumption is violated in key applications such as OOD detection with flow-based models, where the data distribution can be highly non-Gaussian.
As a result, existing theory fails to provide guarantees in more general settings.
This raises a fundamental question:
\textit{Can the relaxed triangle inequality for KL divergence be extended beyond Gaussian families, while preserving an optimal stability rate?}

In this paper, we resolve this limitation by establishing a relaxed triangle inequality between an arbitrary distribution and Gaussian distributions. Specifically, given an arbitrary distribution $P$ with finite second moment and two  multivariate Gaussian distributions $\N_1$ and $\N_2$, we establish a sharp stability bound. If $KL(P||\N_1)>C$ and $KL(\N_1||\N_2)\leq \e$, where $C$ is a large constant and $\e$ is a small positive constant, then $KL(P||\mathcal{N}_2)\ge C-O(\sqrt{\e})$, where the $\sqrt{\epsilon}$ rate is optimal.

This relaxed triangle inequality connects an arbitrary distribution and Gaussian families, revealing a stability property of KL divergence under Gaussian perturbations. 
This is non-trivial because KL divergence is asymmetric and lacks a triangle inequality in general.
Our analysis on the tightness reveals that the $\sqrt{\epsilon}$ rate is not an artifact of non-Gaussian analysis, but an intrinsic geometric property of KL divergence itself.

%The conclusion can also be interpreted as the stability of KL divergence $KL(P||\N_1)$ under Gaussian perturbation from $\N_1$ to $\N_2$. 

This result enables KL-based reasoning in settings where Gaussian assumptions fail, and closes the gap between existing Gaussian-based theory and practical non-Gaussian scenarios.
In OOD detection problem \cite{zhang2023outofdistribution}, the distribution of a given OOD dataset can be arbitrary. Therefore, the conclusion obtained in this paper guarantees that the KL divergence between the distribution of OOD representations and prior must be large. Our result provides a rigorous theoretical foundation for explaining counterintuitive phenomena in flow-based generative models, where OOD data may receive high likelihood yet remain hard to sample. Thus, it also establishes a solid foundation to the OOD detection algorithm in \cite{zhang2023outofdistribution}.  Our conclusion also enables KL-based analysis in more general non-Gaussian settings. We discuss other potential applications in deep learning and reinforcement learning.

The contributions of this paper are as follows.
\begin{enumerate}
	\item \textbf{Optimal stability of KL divergence under Gaussian perturbations.}
	We establish the first relaxed triangle inequality for KL divergence that connects an \emph{arbitrary distribution} with Gaussian families. 
	If a distribution $P$ with finite second moment is far from a Gaussian $\mathcal{N}_1$ in KL divergence, then it remains far from any nearby Gaussian $\mathcal{N}_2$, with a tight bound 
	$KL(P||\mathcal{N}_2) \ge KL(P||\mathcal{N}_1) - O(\sqrt{\epsilon})$.
	This result reveals a \emph{stability property} of KL divergence under Gaussian perturbations and significantly extends prior results that rely on fully Gaussian assumptions. Our analysis removes the Gaussian assumption on one side and only requires mild moment conditions. 
	This suggests that the relaxed triangle behavior of KL divergence reflects a more general geometric property beyond Gaussian structure.
	
	\item \textbf{Bridging a fundamental theoretical gap in flow-based OOD analysis.}
	Existing theoretical explanations of OOD phenomena in flow-based models rely on strong Gaussian assumptions or approximation argument. 
	Our result provides the first rigorous justification that removes these assumptions, showing that the separation between OOD representations and prior in KL divergence persists even when the underlying data distribution is non-Gaussian. 
	This establishes a solid theoretical foundation for KL-based OOD detection methods.
	
	\item \textbf{Enabling KL-based reasoning in realistic non-Gaussian settings.}
	Our framework extends the applicability of KL divergence analysis beyond parametric families, making it suitable for practical scenarios in deep generative models and reinforcement learning where distributional assumptions are violated. 
	We discuss several implications and potential applications enabled by this generalization.
	
\end{enumerate}

The remainder of this paper is organized as follows.
Section \ref{relatedwork} reviews related work.
Section \ref{background} introduces the necessary background.
Section \ref{problem} presents the  research problem that motivates this study.
Section \ref{theory} presents the theoretical results.
Section \ref{application} discusses applications.
Section \ref{discuss} discusses the work.
Finally, Section \ref{conclusion} concludes the paper.

\section{Related Work}\label{relatedwork}

\textbf{Divergence}. KL divergence has a wide range of applications in machine learning \cite{PRML, goodfellow2016deep, opti-reinforce-KL2010, MDP-KL-cost2014, distance_balance_KL2019}, information theory \cite{cover2012elements}, and statistics \cite{pardo2018statistical}.
Researchers have studied the KL divergence across various types of distributions \cite{rached2004kullback, low_upper_bound_kl_gmm,hershey2007approximating,KLD_MGGD,nielsen2021kullback}. %In \cite{pardo2018statistical}, a bound on the KL divergence between Gaussian distributions is presented.
Recently, Zhang \textit{et al.}
reveal that KL divergence between multivariate Gaussian distributions satisfies approximate symmetry and relaxed triangle inequality. Xiao \textit{et al.} improves the bound of the relaxed triangle inequality in \cite{xiao2026relaxedtriangleinequalitykullbackleibler}. Their conclusions can be widely applied in OOD detection \cite{zhang2023outofdistribution}, safe reinforcement learning \cite{liu2022constrained}, sampling complexity research \cite{JACM-sample-complexity-GMM2020}, \textit{etc}.

KL divergence is a special case of several more general divergence measures, including Bregman divergence \cite{bregman1967relaxation,cluster-bregman-div,functional_bregman_divergence}, $f$-divergence \cite{ali1966general,JMLR-bound-f-divergence-2021}, R\'{e}nyi divergence \cite{renyi1961measures}, $\beta$-divergence \cite{betadiv2021}, and $(f,\Gamma)$-divergence \cite{f-gamma-divergence}. 
%Bregman divergence defines a class of divergences in vector spaces \cite{cluster-bregman-div}. 
In particular, KL divergence between multinomial distributions can be viewed as a special case of Bregman divergence when the convex function is chosen as $\sum_{i=1}^{n} p_i \log p_i$, where $p_i \ge 0$ defines a multinomial distribution. 
Similarly, KL divergence between continous probability distributions can be interpreted as a special case of functional Bregman divergence \cite{functional_bregman_divergence}, which satisfies a generalized Pythagorean theorem \cite{cluster-bregman-div,functional_bregman_divergence}.

Since KL divergence is not a proper distance metric, many alternative divergences are proposed, including Jensen–Shannon divergence
\cite{menendez1997jensen},  $f$-divergence \cite{renyi1961measures}, and Wasserstein distance \cite{givens1984class}. A comprehensive survey of statistical divergences is provided by Pardo \cite{pardo2018statistical}.

\textbf{OOD Detection}.
Recent years have witnessed significant progress in deep learning-based OOD detection methods \cite{GeneralizedOOD_Survey_2024, anomaly_detection_survey_2009,deep_anomaly_detection_survey_2021, GAN_anomaly_survey_2020, anomaly_cps_2022}. 
In particular, flow-based generative models, which provide exact likelihood estimates, appear to offer a natural solution for OOD detection by using the model likelihood $p(\bm{x})$ as a detection score \cite{GADSurvey2018}. 
However, it has been widely observed that likelihood values from such models are often unreliable for distinguishing in-distribution (ID) and OOD data \cite{nalisnick2019detecting}. 
A series of studies have investigated this phenomenon from different perspectives. 
Nalisnick \textit{et al.} \cite{nalisnick2019detecting} demonstrated that likelihoods can be easily manipulated (\textit{e.g.}, via input transformations such as contrast scaling). 
In \cite{zhang2023outofdistribution}, Zhang \textit{et al.}  conducted an theoretical analysis and then proposed using group-wise KL divergence from the last scale of flow-based model to detect OOD data. Their work provides a rigorous foundation for Gaussian case, but relies on a strong assumption for the general case. The theoretical results presented in this work can provide a rigorous foundation for their analysis and algorithm.

\section{Background}\label{background}

\subsection{Flow-based Model}
Flow-based models are a class of deep generative models that learn an explicit and tractable probability density by constructing an invertible mapping between data space and latent space. % with a simple prior distribution, typically a multivariate Gaussian. 
Let $z=f_{\theta}(x)$ denote a flow-based model parameterized by a neural network. 
In practice, the model is usually constructed by a composition of  diffeomorphisms implemented with multilayered neural networks
\begin{align}\nonumber
	&x \stackrel{f_1}{\longleftrightarrow} h_1 \stackrel{f_2}{\longleftrightarrow} h_2 \dots \stackrel{f_n}{\longleftrightarrow} z
\end{align}
where the bijective transformation 
$f(x)=f_n\circ f_{n-1} \cdots f_1(x)$ is composed by multiple bijective functions $f_i$.
In practice, $x$ denotes an input data point, such as an image, and
$z$ represents its representation. 
Since $f$ is bijective, the input $x$ can be exactly reconstructed via $f^{-1}(z)$.

Unlike variational autoencoders or generative adversarial networks, flow-based models allow exact likelihood evaluation.
By the change-of-variables formula, the data likelihood can be computed exactly as
\begin{align}
	\log p_{X}(x) & =\log p_{Z}(f(x))+\log \left|{\rm det}\dfrac{\partial z}{\partial x^T}\right| \\
	& = \log p_{Z}(f(x))+\sum\nolimits_{i=1}^n\log \left|{\rm det}\dfrac{\partial h_i}{\partial h_{i-1}^T}\right| \label{equ:change_of_var} 
\end{align}
where $x=h_0, z=h_n, \partial h_i/dh_{i-1}^T$ is the Jacobian of $f_i$, $\det$ is the determinant. The prior $p_{Z}$ is usually chosen as simple tractable density function. The most commonly used prior is standard  Gaussian distribution $\mathcal{N}(\bm{0}, \bm{I})$. This leads $\log p_{Z}(z)=-(1/2)\times\sum_i z_i^2+C$ ($C$ is a constant).

Training a flow-based model is typically performed via maximum likelihood estimation (MLE). 
Given a dataset $\{x_i\}_{i=1}^n$, the objective is to maximize the log-likelihood
$\sum_{i=1}^{n}\log p_{\theta}(x_i)$, where $p_{\theta}$ denotes the model density parameterized by 
$\theta$. In expectation form, this corresponds to maximizing $\mathbb{E}_{P_X}\left[\log p_{\theta}(x) \right]$, where $P_X$ is the distribution of training data. This is equivalent to minimizing the KL divergence $KL(P_X||p_{\theta})$ due to the following equation
\begin{align}\nonumber
	KL(P_X||p_{\theta}) &=\mathbb{E}_{P_X}\left[\log \frac{P_X(x)}{p_{\theta}(x)}\right]\\
	&=\mathbb{E}_{P_X}\left[\log P_X(x)\right] - \mathbb{E}_{P_X}\left[\log p_{\theta}(x)\right]\nonumber	
\end{align}
where the first term of the right hand side does not depend on $\theta$.

Sampling from a flow-based model is straightforward and efficient. One can first draw a latent variable $z\sim P_Z$ from the prior distribution, 
and then obtain a data sample by applying the inverse transformation $x=f^{-1}(z)$.

Representative architectures include RealNVP \cite{dinh2016realnvp}, Glow \cite{glowopenai}, residual flow \cite{chen2019residual}, and TarFlow \cite{tarflow}. Flow-based models employ carefully designed coupling layers to ensure both computational efficiency and expressive power. Owing to their exact likelihood computation and well-structured latent space, flow-based models have been widely studied in density estimation and out-of-distribution detection.

\section{Motivating Research Problem}\label{problem}

In recent years, numerous studies have shown that flow-based models \cite{kingma2018glow} often fail to reliably distinguish OOD data from ID data based solely on model likelihood, resulting in severe Type II errors \cite{nalisnick2019detecting,zhang2023outofdistribution}.
For instance, Glow model \cite{kingma2018glow} trained on CIFAR-10 assigns higher likelihoods to SVHN samples than to the training data itself (see Figure \ref{fig:logpx_cifar10_svhn_contrast_glow}). Similar phenomena have been observed in other flow-based architectures, including residual flow models \cite{chen2019residual} and GlowGMM \cite{zhang2023outofdistribution}.

\begin{figure}[t]
	\centering
	\includegraphics[width=6.65cm]{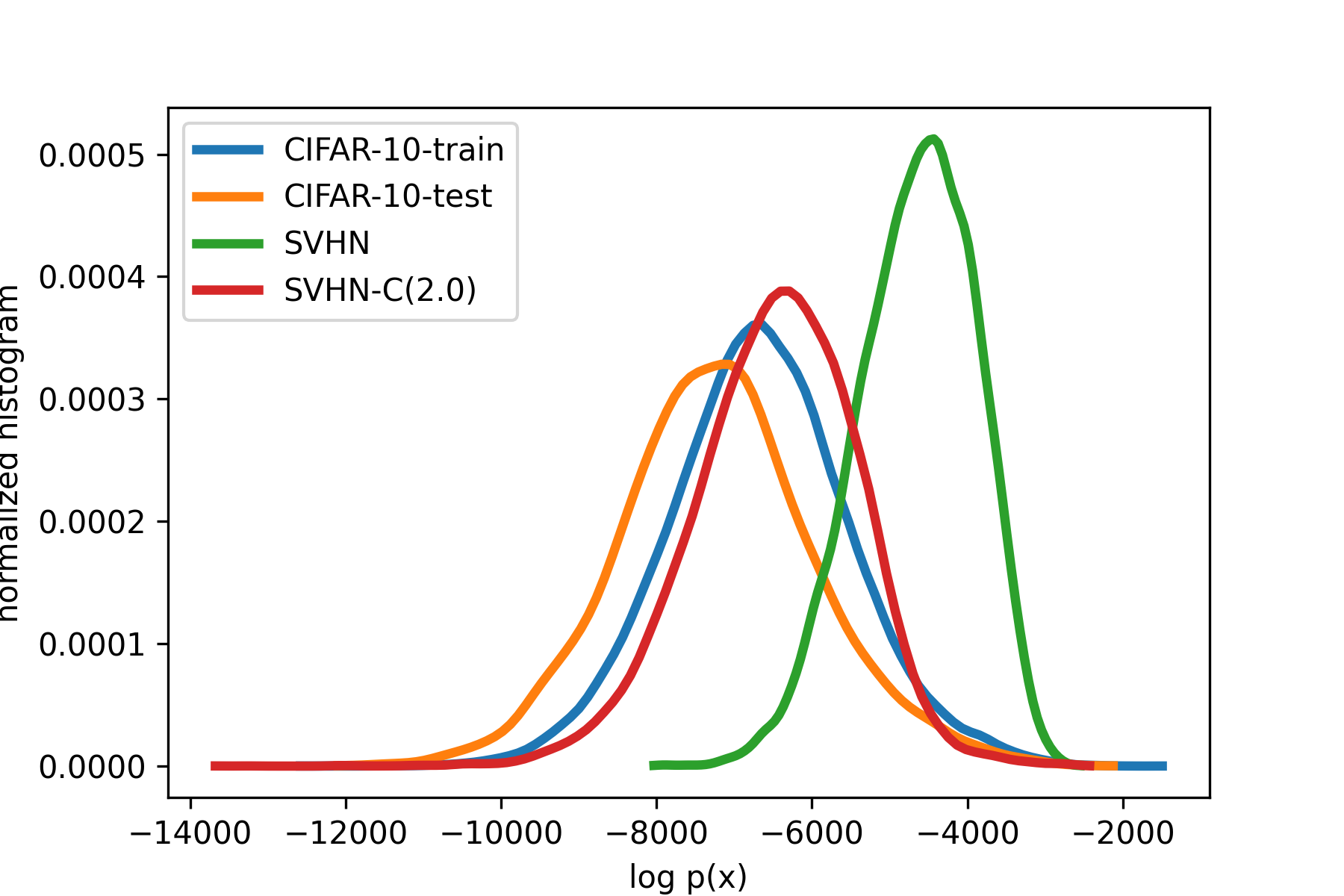}
	%\vspace{-10pt}
	\caption{Distribution of model log-likelihood. Glow model trained on CIFAR-10 assigns higher likelihoods to SVHN and comparable likelihoods to SVHN with an adjusted contrast of factor 2.0 (SVHN-C(2.0)).}
	\label{fig:logpx_cifar10_svhn_contrast_glow}
\end{figure}

%Notably, empirical evidence indicates that the likelihood of OOD data can be systematically manipulated by adjusting contrast \cite{zhang2023outofdistribution,nalisnick2018deep}. As shown in Figure~\ref{fig:logpx_cifar10_svhn_contrast_glow}, SVHN with contrast scaled by a factor of 2.0 exhibit comparable log-likelihood distribution to CIFAR-10. 

These findings demonstrate that the model likelihood $p_{\theta}(x)$ cannot be regarded as a reliable criterion for OOD detection. Moreover, likelihood $p_{\theta}(z)$ evaluated in latent space  can also be manipulated through adjusting contrast \cite{zhang2023outofdistribution}.

Accordingly, the related research questions are:
(1) Why cannot sample OOD data from a flow-based model with prior, regardless of whether OOD samples exhibit higher, lower, or comparable likelihoods?
(2) How to detect OOD data using flow-based models?

In \cite{zhang2023outofdistribution}, Zhang \textit{et al.} analyze the KL divergence between several distributions under flow-based model and then propose an explanation to the above questions. The theoretical analysis is summarized as follows.

Let $z=f(x)$ denote a flow-based model that maps data $x$ in data space to representation $z$ in the latent space. 
Assume the prior distribution $P_Z^r$ is the commonly adopted standard Gaussian distribution.
Let $X_1\sim P_X$ and $X_2\sim Q_X$ denote the distributions of ID and OOD datasets, respectively. 
Their corresponding latent representations are $Z_1=f(X_1)\sim P_Z$ and $Z_2=f(X_2)\sim Q_Z$.
Let $P_X^r$ be the model-induced distribution such that $Z_r\sim P^r_Z$ and $X_r=f^{-1}(Z_r)\sim P^r_X$.
These distributions are denoted as bold lines in Figure \ref{fig:big_picture}.

On one hand, we may reasonably assume the KL divergence between the OOD and ID data distributions, \textit{i.e.}, $KL(P
_X||Q_X)$, is sufficiently large. 
Since flow-based model preserves KL divergence \cite{nielsen2020elementary}, it follows that $KL(P_Z||Q_Z)=KL(P_X||Q_X)$ and is therefore also large.
On the other hand, flow-based models are typically trained via maximum likelihood estimation, which is equivalent to minimizing the forward KL divergence $KL(P_X||P_X^r)$ \cite{papamakarios2019flow_model_survey,goodfellow2016deep}. Therefore, we can assume $KL(P_X||P_X^r)=KL(P_Z||P_Z^r)$ and is sufficiently small.

Normality test results (see Table C.3 in the supplementary material of \cite{zhang2023outofdistribution}) on latent representations show that the distribution of ID representations ($P_Z$) exhibits Gaussian-like behavior across all datasets. 
Moreover, the distribution of OOD representations, $Q_Z$, is also approximately Gaussian for a wide range of OOD datasets with higher or comparable likelihoods to ID data. These observations justify approximating $P_Z$ and, when appropriate, $Q_Z$ with Gaussian distributions.

The subsequent analysis is conducted under two settings: the general case and a special Gaussian case.
In the general case, Zhang \textit{et al.} adopt the strong assumption that $P_Z\approx P_Z^r$.
Under this assumption, $KL(P_Z || Q_Z) \approx KL(P_Z^r || Q_Z)$, implying that $KL(P_Z^r || Q_Z)$ remains large whenever $KL(P_Z || Q_Z)$ is large.
In the Gaussian case (shown in Figure \ref{fig:big_picture}), where the latent representations of OOD data are approximately Gaussian, 
the three distributions $P_Z$, $Q_Z$, and  $P_Z^r$ can all be treated as Gaussian distributions. In this setting, the strong assumption $P_Z\approx P_Z^r$ is no longer required. Instead, the analysis relies on proven properties of KL divergence between Gaussian distributions \cite{zhang2023properties}. Since $KL(P_Z||Q_Z)$ is large while $KL(P_Z||P_Z^r)$ is small, the relaxed triangle inequality implies that $KL(P_Z^r||Q_Z)$ must also be large. Moreover, the approximate symmetry of KL divergence between Gaussian distributions further implies   $KL(Q_Z||P_Z^r)$ is large.

In summary, under both settings, $KL(P_Z^r||Q_Z)$ is large. This explains why cannot generate OOD data by sampling from the prior of a flow-based model.

\textbf{Research Question.} 
The above analysis is conducted in two settings. The special Gaussian case, which has a rigorous theoretical foundation, is not applible to OOD dataset whose represetations do not approximately follow Gaussian distribution. While for the general case, the analysis adopts a strong assumption $P_Z\approx P_Z^r$, which neglects the model fitting error and directly yields the conclusion.

The underlying issue is the lack of a general theoretical result to support the analysis. The relaxed triangle inequality established in \cite{zhang2023properties} applies to the KL divergence among three Gaussian distributions and is therefore limited in the Gaussian setting. For OOD datasets whose latent representations are not approximately Gaussian, Zhang \textit{et al.} instead rely on the strong assumption $P_Z\approx P_Z^r$, thereby neglecting the model fitting error.

\begin{figure}[t]
	\centering
	\includegraphics[width=9cm]{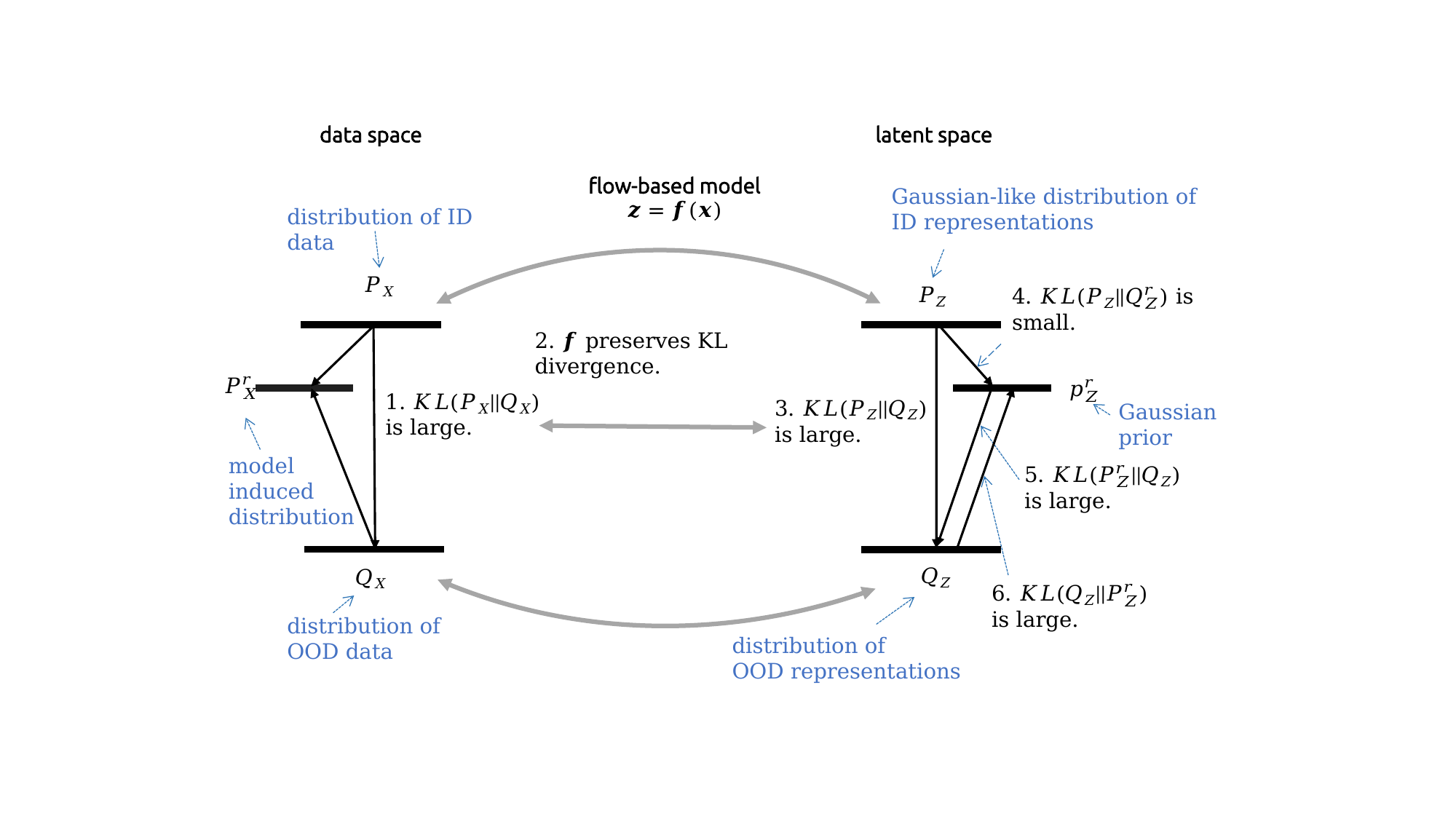}
	%\vspace{-10pt}
	\caption{KL divergence-based analysis for OOD detection for the Gaussian case where $Q_Z$ can be approximated by a Gaussian distribution \cite{zhang2023outofdistribution}.}
	\label{fig:big_picture}
\end{figure}

This naturally leads to the following research question.
\textit{Given two Gaussian distributions $\N_1$ and $\N_2$, and an arbitrary distribution  $P$, suppose that 
	$KL(\N_1||\N_2)$  is large while $KL(P||\N_1)$ is small. Can we establish a theoretical guarantee that 
	$KL(P||\N_2)$  is necessarily large?}

In this paper, we investigate this question and provide a positive answer. Specifically, we derive a tight lower bound showing that 
$KL(P||\N_2)$ must also be large under the stated conditions. This theoretical result provides a more general tool in KL-based analysis for OOD detection and other applications.

\section{Theoretical Result}\label{theory}
%\begin{table} [ht]
%\small
%\vspace{-0pt}
%\caption{Notations.}
%\label{tbl:notations}
%\begin{center}
%	\begin{tabular}{cc}
	%		\toprule[1pt]
	%		$f(x)$  & $x-\log x-1\ (x\in \mathbb{R}^{++})$ \\
	%		$W(x)$ & the Lambert $W$ function\\
	%		$W_0(x)$ & the principal branch (branch 0) of $W(x)$\\
	%		$W_{-1}(x)$ & the branch $-1$ of $W(x)$\\
	%		$w_1(t)$ & the smaller solution of  $f(x)=1+t\ (t\geq 0)$\\
	%		$w_2(t)$ & the larger solution of  $f(x)=1+t\ (t\geq 0)$\\
	%		$\lambda$ & the eigenvalue of matrix\\
	%		%$\lambda^*$ & the largest eigenvalue of matrix\\
	%		%$\lambda_*$ & the least eigenvalue of matrix\\
	%		%$f_l(x)$ & $f(1-x)-1\ (0\leq x<1)$\\
	%		%$f_r(x)$ & $f(x+1)-1\ (x\geq 0)$\\
	%		%$g_l(\varepsilon)$ & $f_l^{-1}(\varepsilon)$, the inverse function of $f_l$\\
	%		%	$g_r(\varepsilon)$ & $f_r^{-1}(\varepsilon)$, the inverse function of $f_r$\\
	%		$\mathcal{N}(0,I)$ & standard Gaussian distribution, dimension $n$ is eliminated for brevity\\
	%		\bottomrule[1pt]
	%	\end{tabular}
%\end{center}
%\end{table}
In this section, we first give some lemmas and then present the main theorem.
%The notations used in this paper are listed in Table \ref{tbl:notations}.
\subsection{Lemmas}
We note function $f(x)=x-\log x-1$ $(x\in \mathbb{R}_{>0})$, where $\mathbb{R}_{>0}$ is the set of positive real numbers.
\begin{figure}
	\centering	
	\includegraphics[width=7cm]{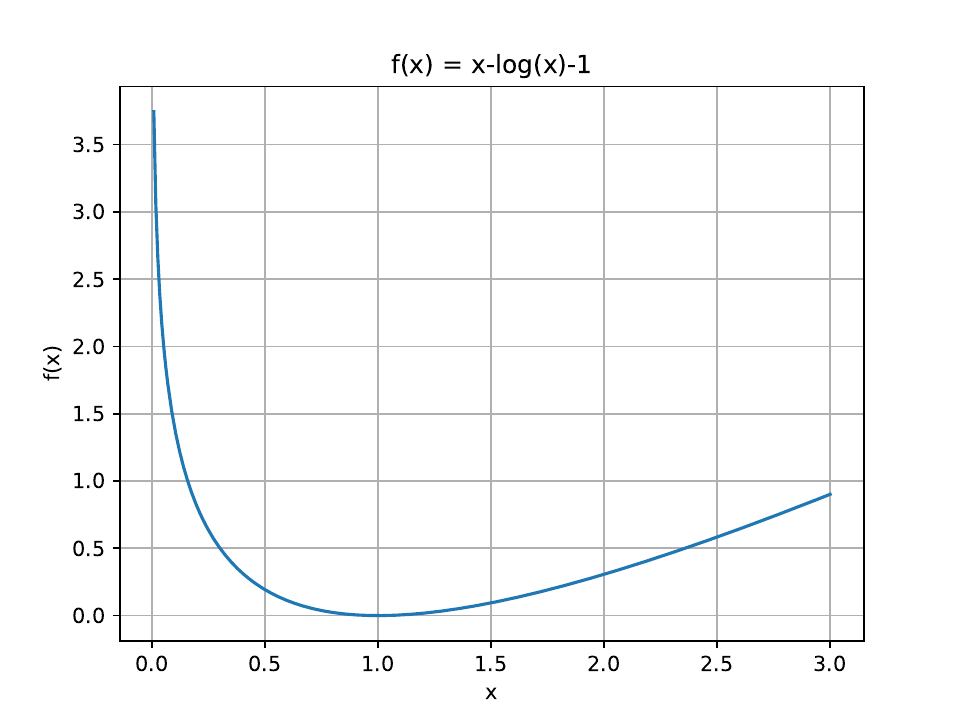}
	\caption{$f(x)=x-\log x-1$.}
	\label{fig:f}
\end{figure}

\begin{lemma}\label{lemma:f}
	$f(x)=x-\log x-1\ (x\in \mathbb{R}_{>0})$ is convex and takes its minimum $f(1)=0$.
\end{lemma}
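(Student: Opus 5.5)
The plan is a direct calculus argument on $\mathbb{R}_{>0}$, done in three steps.

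First, differentiate. For $x>0$ we have
\begin{align}
f'(x) &= 1 - \frac{1}{x}, \nonumber\\
f''(x) &= \frac{1}{x^2}. \nonumber
\end{align}
Since $f''(x)>0$ on the whole domain, $f$ is (strictly) convex on $\mathbb{R}_{>0}$. This is the convexity claim.

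Second, locate the minimum. The only critical point is where $f'(x)=0$, i.e.\ $x=1$. For $0<x<1$ we have $f'(x)<0$, and for $x>1$ we have $f'(x)>0$. So $f$ is strictly decreasing on $(0,1]$ and strictly increasing on $[1,\infty)$, and $x=1$ is the unique global minimizer. Evaluating gives $f(1)=1-\log 1-1=0$, hence $f(x)\ge 0$ with equality only at $x=1$.

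As a cross-check that avoids derivatives for the minimum, one can use the classical inequality $\log x\le x-1$. This follows from the concavity of $\log$: its tangent line at $x=1$ is $x-1$, and a concave function lies below its tangents. The inequality gives $f(x)\ge 0$ directly, with equality exactly at $x=1$.

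Nothing here is a real obstacle. The only point to state explicitly is that strict positivity of $f''$ on an open interval gives convexity. It may also be worth recording the limits $f(x)\to\infty$ as $x\to 0^+$ and as $x\to\infty$, because later arguments, where $f$ is applied to eigenvalues of covariance matrices, may invert $f$ on the two monotone branches $(0,1]$ and $[1,\infty)$.
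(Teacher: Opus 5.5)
Your proof is correct and follows the paper's argument: it gets convexity from $f''(x)=x^{-2}>0$ and places the minimum at $x=1$, the unique zero of $f'(x)=1-1/x$, so the minimum value is $f(1)=0$. Your sign analysis of $f'$ and the check via $\log x\le x-1$ are harmless extras, since the paper implicitly uses the fact that a stationary point of a differentiable convex function is already a global minimizer.
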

\begin{proof}
	The second-order derivative $f''(x)=x^{-2}>0\ (x\in \mathbb{R}^{++})$. So $f$ is convex. $f'(x)=1-\frac{1}{x}$ takes 0 at $x=1$, so the minimum is $f(1)=0$.
	The figure of $f(x)$ is shown in Figure \ref{fig:f}.
	$\hfill\blacksquare$
\end{proof}
\begin{lemma}\label{lemma:f-lower-bound}
	$f(x)\geq \frac{(x-1)^2}{3}\ (x\in [0.5, 1.5])$
\end{lemma}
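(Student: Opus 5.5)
Write $t=x-1\in[-0.5,0.5]$ and set $g(t)=f(1+t)-\frac{t^2}{3}=t-\log(1+t)-\frac{t^2}{3}$. The statement of Lemma~\ref{lemma:f-lower-bound} is then equivalent to $g(t)\ge 0$ on $[-0.5,0.5]$. The plan is to show that $g$ attains its minimum over this interval at $t=0$, where $g(0)=0$, by looking at the sign of $g'$.

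First compute
\begin{align}\nonumber
g'(t)=1-\frac{1}{1+t}-\frac{2t}{3}=\frac{t}{1+t}-\frac{2t}{3}=t\cdot\frac{3-2(1+t)}{3(1+t)}=\frac{t(1-2t)}{3(1+t)}.
\end{align}
On $[-0.5,0.5]$ the denominator $3(1+t)$ is positive and $1-2t\ge 0$. Hence $g'(t)\le 0$ for $t\in[-0.5,0]$ and $g'(t)\ge 0$ for $t\in[0,0.5]$. So $g$ is nonincreasing on $[-0.5,0]$ and nondecreasing on $[0,0.5]$, which gives $g(t)\ge g(0)=0$ throughout the interval. Translating back to $x=1+t$ gives $f(x)\ge (x-1)^2/3$ for $x\in[0.5,1.5]$.

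There is no real obstacle here; the only point needing care is the factor $1-2t$ in the derivative. It is exactly what limits the right endpoint to $x\le 1.5$: for $t>0.5$ the derivative becomes negative and the inequality eventually fails. As a cross-check, one could also argue via Taylor's theorem with Lagrange remainder. Since $f(1)=f'(1)=0$ and $f''(\xi)=\xi^{-2}$, we get $f(x)=\frac{(x-1)^2}{2\xi^2}$ for some $\xi$ between $1$ and $x$. This yields the constant $\frac{1}{2\cdot 1.5^2}=\frac{2}{9}$, which is weaker than $\frac13$. The derivative-sign argument above is therefore the one to use.
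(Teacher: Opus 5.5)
Your proof is correct and is essentially the paper's argument: the paper sets $H(x)=f(x)-\frac{(x-1)^2}{3}$, computes $H'(x)=-\frac{1}{3x}(x-1)(2x-3)$, and uses this sign pattern to get $H(x)\ge H(1)=0$ on $[0.5,1.5]$. Your $g'(t)=\frac{t(1-2t)}{3(1+t)}$ is the same derivative after the substitution $x=1+t$.
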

\begin{proof}
	%We first prove $f(x)\geq \frac{(x-1)^2}{2max(x,1)}$, then use the interval to bound the right hand side.
	
	Let $H(x)=f(x)-\frac{(x-1)^2}{3}$. It is easy to know $H(1)=0$.  The first-order derivative $H'(x)=-\dfrac{1}{3x}(x-1)(2x-3)$. 
	When $x\in[0.5,1]$, $H'(x)\leq 0$. When $x\in [1,1.5]$, $H'(x)\geq 0$. So $H(x)$ takes its minimum $H(1)=0$. This implies $f(x)\geq \frac{(x-1)^2}{3} (x\in [0.5,1.5])$.
	
	%	For $x\geq 1$, let $H(x)=f(x)-\frac{(x-1)^2}{2x}$. It is easy to know $H(1)=0$. The first-order derivative $H'(x)=\frac{1}{2}(\frac{1}{x}-1)^2\geq 0$. So $H(x)$ is monotonically increasing when $x\geq 1$. This means $f(x)\geq \frac{(x-1)^2}{2x}$ holds for $x\geq 1$. When $x\in [0, 1.5]$, it is easy to know $f(x)\geq \frac{(x-1)^2}{3}$.
	%	
	%	For $x\in [0.5, 1]$, let $G(x)=f(x)-\frac{(x-1)^2}{3}=-\frac{1}{2}x^2+2x-\log x-\frac{3}{2}$. The first-order derivative is $G'(x)=-x-\frac{1}{x}-\frac{2}{3}(x-1)$. The second-order derivative is $G''(x)=\frac{1}{x^2}-1>0$. So $G(x)$ is convex in $[0.5, 1]$. Considering $G'(1)=0$, $G(x)$ is monotonically decreasing in $[0.5, 1]$. This implies $f(x)\geq \frac{(x-1)^2}{2}>\frac{(x-1)^2}{3}$. 
	%	
	%	Combing the above two cases, the conclusion can be proved.
	$\hfill\blacksquare$
\end{proof}

\begin{lemma}\label{lem:pairaverage}
	Let $\bm{x}^{(1)}=(x_1^{(1)},\dots,x_n^{(1)})\in R^n_{\geq 0}$ satisfying $\sum_{i=1}^n \left(x_i^{(1)}\right)^2=C>0$. Define a sequence  $\{\bm{x}^{(k)}\}$ where $\bm{x}^{(k+1)}$ is obtained by applying the following step on $\bm{x}^{(k)}$: Choose indices $i\neq j$ such that $i_k,j_k=\arg\max\limits_{i,j} |(x_i^{(k)})^2-(x_j^{(k)})^2|$
	, and set 
	$$
	x_{i_k}^{(k+1)}=x_{j_k}^{(k+1)}=\sqrt{\frac{\left(x_{i_k}^{(k)}\right)^2+\left(x_{j_k}^{(k)}\right)^2}{2}}
	$$
	while keeping other elements unchanged. 
	Then $$\lim\limits_{k\rightarrow \infty} \bm{x}^{(k)}=\left(\sqrt{\frac{C}{n}},\dots,\sqrt{\frac{C}{n}}\right)$$
\end{lemma}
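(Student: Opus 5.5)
The plan is to work with the squares rather than the coordinates themselves. Set $y_i^{(k)}=\left(x_i^{(k)}\right)^2\ge 0$. In these variables the update replaces $y_{i_k}^{(k)}$ and $y_{j_k}^{(k)}$ by their arithmetic mean and leaves every other entry unchanged. This has two immediate consequences. First, $\sum_i y_i^{(k)}=C$ for every $k$, so the mean $m=C/n$ is invariant. Second, all entries stay nonnegative, so $x_i^{(k)}=\sqrt{y_i^{(k)}}$ remains well defined. Since $t\mapsto\sqrt{t}$ is continuous on $[0,\infty)$, it suffices to prove $y^{(k)}\to(m,\dots,m)$.

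For the convergence I would use the potential
\[
V_k=\sum_{i=1}^n\left(y_i^{(k)}-m\right)^2 .
\]
Replacing $a=y_{i_k}^{(k)}$ and $b=y_{j_k}^{(k)}$ by $(a+b)/2$ changes the potential by
\[
(a-m)^2+(b-m)^2-2\left(\tfrac{a+b}{2}-m\right)^2=\tfrac{(a-b)^2}{2},
\]
and the mean $m$ does not move. Hence $V_{k+1}=V_k-\tfrac12 D_k^2$. Here $D_k=|a-b|$, and because $(i_k,j_k)$ maximizes $|y_i^{(k)}-y_j^{(k)}|$, $D_k$ equals $\max_i y_i^{(k)}-\min_i y_i^{(k)}$.

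The key step, and the only real obstacle, is to compare $D_k$ with $V_k$ so that the decrease is a fixed fraction of $V_k$. The mean $m$ lies between the minimum and the maximum entry, so every term satisfies $|y_i^{(k)}-m|\le D_k$. This gives $V_k\le nD_k^2$, and therefore
\[
V_{k+1}\le\left(1-\tfrac{1}{2n}\right)V_k ,\qquad V_k\le\left(1-\tfrac{1}{2n}\right)^{k-1}V_1\to 0 .
\]
If $D_k=0$ at some step, then all entries already equal $m$ and the sequence is constant from then on, which is consistent with the bound.

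Finally, $V_k\to0$ means $y_i^{(k)}\to C/n$ for every $i$. By continuity of the square root, $x_i^{(k)}\to\sqrt{C/n}$, which is the claimed limit. The argument also yields a geometric rate, although the statement does not require it.
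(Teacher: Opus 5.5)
Your proof is correct, and it starts from the same setup as the paper. The paper also substitutes $y_i^{(k)}=(x_i^{(k)})^2$. It uses the same potential (your $V_k$ is its $\Phi(\bm{y}^{(k)})$) and the same identity $\Phi(\bm{y}^{(k)})-\Phi(\bm{y}^{(k+1)})=\frac{1}{2}D_k^2$.

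You differ in how you show the potential tends to zero. The paper's argument is qualitative. It first proves, by a two-case analysis, that the maximal gap $D_k$ is non-increasing. It then telescopes the decrements to get $\sum_k D_k^2<\infty$, hence $D_k\to 0$. Finally, it uses the maximality of the chosen pair and the conserved sum to conclude $y_i^{(k)}\to C/n$.

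You instead use $V_k\le nD_k^2$, which holds because $m$ and every entry lie in $[\min_i y_i^{(k)},\max_i y_i^{(k)}]$. This turns the decrement identity into the contraction $V_{k+1}\le(1-\frac{1}{2n})V_k$. Your route gives an explicit geometric rate and avoids the monotonicity case analysis. That step is not actually needed in the paper's argument either, since summability of the decrements already forces $D_k\to 0$.

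In both proofs the greedy max-gap choice does the essential work. The paper uses it to pass from the chosen pair's gap to all pairwise gaps. You use it to identify $D_k$ with the full spread $\max_i y_i^{(k)}-\min_i y_i^{(k)}$, which is what makes the comparison with $V_k$ possible.
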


\begin{proof}
	%This lemma can be proved similarly as the convergence of pairwise averaging procedure \cite{haggstrom2012pairwise}.
	\textbf{Step 1.}
	
	Let $y_i^{(k)}=(x_i^{(k)})^2$ and $\bm{y}^{(k)}=((x_1^{(k)})^2,\dots,(x_n^{(k)})^2)$. Then $y_i^{(k)}\geq 0$ and $\sum_{i=1}^n y_i^{(k)}=C$.
	The update step becomes
	$$
	y_{i_k}^{(k+1)}=y_{j_k}^{(k+1)}=\frac{y_{i_k}^{(k)}+y_{j_k}^{(k)}}{2}
	$$
	At each step, $\sum_{i=1}^n y_i^{(k)}=C$ is preserved.
	
	Define $\mu=\frac{C}{n}$ and  $$\Phi(\bm{y}^{(k)})=\sum_{i=1}^{n}\left(y_i^{(k)}-\mu\right)^2$$
	Then $\Phi \geq 0$, $\Phi=0$ if and only if $y_i^{(k)}=\mu$ for each $i$.

	Suppose $y_{i_k}^{(k)}$ and $y_{j_k}^{(k)}$ are chosen in the $k$-th step.  
	The change of $\Phi$ is 
	\begin{align}
		\Delta_k & = \Phi(\bm{y}^{(k)})-\Phi(\bm{y}^{(k+1)})\\
		& = (y_{i_k}^{(k)}-\mu)^2+(y_{j_k}^{(k)}-\mu)^2-2\left(\frac{y_{i_k}^{(k)}+y_{j_k}^{(k)}}{2}-\mu\right)^2\\
		&=\frac{1}{2}(y_{i_k}^{(k)}-y_{j_k}^{(k)})^2 \label{eq:deltaphi}\\ 
		& \geq 0
	\end{align}
	
	So in each step $\Phi(\bm{y}^{(k)})\geq \Phi(\bm{y}^{(k+1)})$.
	When $y_{i_k}^{(k)}\neq y_{j_k}^{(k)}$, the inequality is strict. Since $\Phi(\bm{y}^{(k)})\geq 0$ and $\Phi(\bm{y}^{(k)})$ is non-increasing, so $\lim\limits_{k\rightarrow \infty}\Phi(\bm{y}^{(k)})=\Phi^*\geq 0$.
	
	\textbf{Step 2.}
	
	Now we show $\Delta_k$ is non-increasing with $k$. 
	It suffices to show $|y_{i_k}^{(k)}-y_{j_k}^{(k)}|\geq |y_{i_{k+1}}^{(k+1)}-y_{j_{k+1}}^{(k+1)}|$
	
	Note that  $i_k,j_k=\arg\max\limits_{i,j} |y_i^{(k)}-y_j^{(k)}|$ and $i_{k+1},j_{k+1}=\arg\max\limits_{i,j} |y_i^{(k+1)}-y_j^{(k+1)}|$. We only need to consider cases when $i_k\neq j_k$ and $i_{k+1}\neq j_{k+1}$. Otherwise, $\Delta_{k+1}=0$.
	
	There are two cases to consider.
	\begin{enumerate}
		\item Case 1: $i_k,j_k, i_{k+1},j_{k+1}$ are all distinct.
		In this case, since $i_k,j_k$ maximize $|y_i^{(k)}-y_j^{(k)}|$ in the $k$-th step, so it is easy to know $$|y_{i_k}^{(k)}-y_{j_k}^{(k)}|\geq |y_{i_{k+1}}^{(k)}-y_{j_{k+1}}^{(k)}| =|y_{i_{k+1}}^{(k+1)}-y_{j_{k+1}}^{(k+1)}|$$
		\item Case 2: At least one of $i_{k+1},j_{k+1}$ equals to one of $i_k,j_k$. Without loss of generality, suppose $i_{k+1}=i_k$. It only needs to consider $j_{k+1}\neq j_k$. Otherwise $\Delta_{k+1}=0$. Now considering $i_{k+1}=i_k$ and $j_{k+1}\neq j_k$. 
		Without loss of generality, suppose $y_{i_k}^{(k)}<y_{j_k}^{(k)}$ and $y_{i_{k+1}}^{(k+1)}<y_{j_{k+1}}^{(k+1)}$.
		Suppose to the contrary that $|y_{i_k}^{(k)}-y_{j_k}^{(k)}| < |y_{i_{k+1}}^{(k+1)}-y_{j_{k+1}}^{(k+1)}|$. 
		Since $y_{i_k}^{(k)}$ and $y_{j_k}^{(k)}$ are averaged in the step and other elements are not changed, we obtain
		\begin{align}\nonumber
			&|y_{i_k}^{(k)}-y_{j_k}^{(k)}| < |y_{i_{k+1}}^{(k+1)}-y_{j_{k+1}}^{(k+1)}|\nonumber\\
			\Longrightarrow &  |y_{i_k}^{(k)}-y_{j_k}^{(k)}| < \left|\frac{y_{i_k}^{(k)}+y_{j_k}^{(k)}}{2} -y_{j_{k+1}}^{(k+1)}\right|\nonumber\\
			\Longrightarrow & y_{i_k}^{(k)} < \frac{y_{i_k}^{(k)}+y_{j_k}^{(k)}}{2} < y_{j_k}^{(k)} <y_{j_{k+1}}^{(k+1)} \nonumber\\
			\Longrightarrow & |y_{i_k}^{(k)}-y_{j_k}^{(k)}| < |y_{i_k}^{(k)}-y_{j_{k+1}}^{(k+1)}| \nonumber\\
			\Longrightarrow & |y_{i_k}^{(k)}-y_{j_k}^{(k)}| < |y_{i_k}^{(k)}-y_{j_{k+1}}^{(k)}| \nonumber\\
		\end{align}
		where the last inequality follows from  $y_{j_{k+1}}^{(k)}=y_{j_{k+1}}^{(k+1)}$ and  is not changed in the $k$-th step.
		This violates the definition $i_k,j_k=\arg\max\limits_{i,j} |y_i^{(k)}-y_j^{(k)}|$.
		Therefore, $|y_{i_k}^{(k)}-y_{j_k}^{(k)}|\geq |y_{i_{k+1}}^{(k+1)}-y_{j_{k+1}}^{(k+1)}|$.
	\end{enumerate}
	
	To summarize, $\Delta_k$ is non-increasing.

	\textbf{Step 3.}
	
	Summing up $\Delta_1$ to $\Delta_k$ yields
	\begin{align}
		\sum_{i=1}^{k} \Delta_i & = \sum_{i=1}^{k}\frac{1}{2}(y_{i_k}^{(k)}-y_{j_k}^{(k)})^2\\
		& =\sum_{i=1}^{k} (\Phi(\bm{y}^{(i)})-\Phi(\bm{y}^{(i+1)}))\\
		&=\Phi(\bm{y}^{(1)})-\Phi(\bm{y}^{(k+1)})
	\end{align}
	So we have
	\begin{align}
		\lim\limits_{k\rightarrow \infty}\sum_{i=0}^{k} \Delta_i&=\Phi(\bm{y}^{(1)})-\lim\limits_{k\rightarrow \infty} \Phi(\bm{y}^{(k+1)})\\
		&=\Phi(\bm{y}^{(1)})-\Phi^*\\
		& < \infty
	\end{align}
	
	So $\lim\limits_{k\rightarrow \infty}\sum_{i=0}^{k}\frac{1}{2}(y_{i_k}^{(k)}-y_{j_k}^{(k)})^2<\infty$. This indicates $\lim\limits_{k\rightarrow \infty}\frac{1}{2}(y_{i_k}^{(k)}-y_{j_k}^{(k)})^2=0$. Since $i_k, j_k=\arg\max\limits_{i,j} |y_i^{(k)}-y_j^{(k)}|$, we can know $\lim\limits_{k\rightarrow \infty}\frac{1}{2}(y_{i}^{(k)}-y_{j}^{(k)})^2=0$ for any $i\neq j$. Combing the condition $\sum_{i=1}^n y_i^{(k)}=C$, we can know $\lim\limits_{k\rightarrow \infty}y_{i}^{(k)}=\frac{C}{n}$ for each $i$.
	
	Finally, we can obtain $\lim\limits_{k\rightarrow \infty}x_i^{(k)}=\sqrt{\frac{C}{n}}$ and $\lim\limits_{k\rightarrow \infty} \bm{x}^{(k)}=\left(\sqrt{\frac{C}{n}},\dots,\sqrt{\frac{C}{n}}\right)$.

	$\hfill\blacksquare$
\end{proof}
\begin{lemma}\label{lemma:bound-sum-log-x}
	Given $n$ positive real numbers $x_1,\dots, x_n$, if $\sum_{i=1}^n(x_i-1)^2\leq \e$, where $\e<\frac{1}{2}$ is a small constant, then
	$\left|\sum_{i=1}^n\log x_i\right| \leq -n\log\left(1-\sqrt{\frac{\e}{n}}\right)$.
\end{lemma}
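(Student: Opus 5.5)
Setting $\delta_i = x_i - 1$, the hypothesis $\sum_i \delta_i^2 \le \e < \frac12$ gives $|\delta_i| \le \sqrt{\e} < 1/\sqrt{2}$ for every $i$, so each $x_i = 1+\delta_i$ is positive and $\log x_i$ is well defined. We bound the modulus of $\sum_i \log(1+\delta_i)$ by controlling its largest and smallest possible values over the ball $\{\bm{\delta} : \sum_i \delta_i^2 \le \e\}$.

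\textbf{Lower side.} We first use concavity of $\log$ on each coordinate, i.e.\ $\log(1+\delta_i) \ge \log(1-|\delta_i|)$. Hence
\[
\sum_i \log x_i \;\ge\; \sum_i \log\bigl(1-|\delta_i|\bigr).
\]
Write $t_i = |\delta_i| \ge 0$, so $\sum_i t_i^2 \le \e$, and set $g(t) = \log\bigl(1-\sqrt{t}\bigr)$. Then $\sum_i \log(1-t_i) = \sum_i g(t_i^2)$. The main step is to show that $g$ is concave on $[0,\tfrac12)$. With $s = \sqrt{t}$ we have $g'(t) = -\frac{1}{2s(1-s)}$. The quantity $s(1-s)$ is increasing for $s < \frac12$, which covers our range since $s \le \sqrt{\e} < 0.71$ only requires care near $s > \frac12$. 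So a direct computation of $g''$ is needed. Differentiating $-\frac{1}{2}\bigl(s-s^2\bigr)^{-1}$ in $t$ gives $g''(t) = \frac{(1-2s)}{4s^3(1-s)^2}$, which is nonnegative for $s<\tfrac12$. Thus $g$ is in fact \emph{convex}, not concave.

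Since $g$ is convex and decreasing, Jensen's inequality gives a lower bound,
\[
\frac{1}{n}\sum_i g(t_i^2) \;\ge\; g\Bigl(\frac{1}{n}\sum_i t_i^2\Bigr) \;\ge\; g\Bigl(\frac{\e}{n}\Bigr).
\]
Both inequalities point in the useful direction, because a lower bound on $\sum_i \log x_i$ is exactly what we want here. This yields $\sum_i \log x_i \ge n\log\bigl(1-\sqrt{\e/n}\bigr)$, valid as long as every $t_i^2$ and the average lie in the convexity region $[0,\tfrac14]$.

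\textbf{Obstacle.} Coordinates with $t_i \in (\tfrac12, \sqrt{\e}]$ fall outside the region $[0,\tfrac14]$ where convexity holds. There I would either argue directly that moving mass toward equal values only decreases $\sum_i \log(1-t_i)$, or use the equalization procedure of Lemma~\ref{lem:pairaverage}. That lemma replaces pairs $(t_i,t_j)$ by their root mean square, and I would show that each step does not increase $\sum_i \log(1-t_i)$; this is a two-variable check, since $\log(1-a)+\log(1-b)$ with $a^2+b^2$ fixed is minimized at $a=b$ in the relevant range. Passing to the limit, continuity gives the equal configuration $t_i = \sqrt{\e/n}$. I expect this two-variable check to be the delicate part, and I would verify it by parametrizing $a = r\cos\theta$, $b = r\sin\theta$.

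\textbf{Upper side.} Since $\log(1+\delta) \le \delta$, we get
\[
\sum_i \log x_i \;\le\; \sum_i \delta_i \;\le\; \sqrt{n\e}
\]
by Cauchy--Schwarz. It remains to compare this with $-n\log\bigl(1-\sqrt{\e/n}\bigr)$, which holds because $-\log(1-u) \ge u$ with $u = \sqrt{\e/n}$, so $-n\log(1-u) \ge nu = \sqrt{n\e}$. Combining the lower and upper bounds proves the lemma.
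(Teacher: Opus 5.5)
Your argument is correct in outline, but it is organized differently from the paper's.

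\textbf{Upper side.} The paper computes the exact maximum $n\log(1+\sqrt{\epsilon/n})$ by strict concavity and symmetrization, and then compares $|\log(1+u)|$ with $|\log(1-u)|$. Your sandwich $n\log(1+u)\le nu=\sqrt{n\epsilon}\le -n\log(1-u)$ comes from $\log(1+\delta)\le\delta$ and Cauchy--Schwarz, and gives the needed bound in two lines. The paper's route buys the sharp maximum, which the lemma never uses.

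\textbf{Lower side.} Passing to $t_i=|\delta_i|$ makes the sign reduction explicit, which the paper leaves implicit; note this step uses monotonicity of $\log$, not concavity. Your observation that $g(u)=\log(1-\sqrt{u})$ is convex on $[0,\tfrac14]$ settles the case $\max_i t_i\le\tfrac12$ (hence all $\epsilon\le\tfrac14$) by a single Jensen step, with no iteration. You should drop the opening claim that $g$ is concave. For the remaining case you fall back on exactly the paper's mechanism: root-mean-square pair averaging as in Lemma~\ref{lem:pairaverage}, plus the two-variable minimization. That minimization is the paper's Step~2.1, proved there by the same polar parametrization you suggest.

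That two-variable check is the one step you leave unexecuted. It is also the only place where $\epsilon<\tfrac12$ is really needed, so ``in the relevant range'' should read $a^2+b^2=c<\tfrac12$. It closes without trigonometry: for $a,b\ge 0$ with $a^2+b^2=c$,
\[
2(1-a)(1-b)=\bigl(1-(a+b)\bigr)^2+1-c .
\]
Since $a+b\le\sqrt{2c}<1$, the right-hand side is smallest when $a+b=\sqrt{2c}$, i.e.\ $a=b$. This gives $(1-a)(1-b)\ge(1-\sqrt{c/2})^2$. For $c>\tfrac12$ the value $a+b=1$ becomes attainable and the minimum leaves the diagonal. These are exactly the extra critical points the paper rules out via $\cos\theta+\sin\theta=-1/\sqrt{c}$.

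You also do not need the limiting procedure. Since $\sum_i t_i^2<\tfrac12$, at most one $t_i$ exceeds $\tfrac12$ (if $n=1$ the bound is trivial). Replace it and any other coordinate $t_j$ by their root mean square; by the check this does not increase $\sum_k\log(1-t_k)$. Afterwards every $t_k^2<\tfrac14$: the new common value has square $(t_i^2+t_j^2)/2\le\epsilon/2$, and each untouched coordinate has $t_k^2\le\epsilon-t_i^2<\tfrac14$. Your Jensen step then finishes the proof.
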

\begin{proof}
	We first change variables such that all variables are centered.
	Let $y_i=x_i-1$. Then the problem becomes
	\begin{align}
		\mathrm{max} &\ \left| S(\bm{y})\right |=\left|\sum_{i=1}^ns(y_i)\right|=\left|\sum_{i=1}^n\log (y_i+1)\right|\\
		\mathrm{s.t.} &\ \sum_{i=1}^n y_i^2 \leq \e
	\end{align}
	where $\bm{y}=(y_1,\dots,y_n)$ and $s(y)=\log (y+1)$.
	
	We remove the absolute value in the objective function by solving $\mathrm{max}\ S(\bm{y})$ in \textbf{Step 1} and $\mathrm{min}\  S(\bm{y})$ in \textbf{Step 2}, and then take the larger absolute value in \textbf{Step 3}.
	
	\textbf{Step 1. Solving $\mathrm{max}\ S(\bm{y})$.} 
	
	The feasible set is the closed Euclidean ball of radius $\sqrt{\e}$ centered at the origin.
	$S(\bm{y})$ is continuous and hence has a maximum on the compact feasible set.
	Function $s(y)=\log(y+1)$ is strictly concave and increasing. Thus, $S(\bm{y})=\sum_{i=1}^n s(y_i)$ is also strictly concave and increasing.
	Therefore, the maximum should be attained some point $\bm{y}^*=(y^*_1,\dots,y^*_n)$ on the sphere $\Sigma_{i=1}^n y_i^2 = \e$ and satisfying $y^*_i>0$. 
	
	Suppose $S(\bm{y})$ takes its maximum on some point $\bm{y}=(y_1,\dots,y_i,\dots,y_j,\dots,y_n)$.
	Since $S(\bm{y})$ is symmetric on the exchange of each element $y_i$, we can know $S(\bm{y}')=S(\bm{y})$ for $\bm{y}'=(y_1,\dots,y_j,\dots,y_i,\dots,y_n)$ and is also the maximum.
	
	Furthermore, $S(\bm{y})$ is concave, so we have $$S(\bm{y})=\frac{S(\bm{y})+S(\bm{y}')}{2}\leq S\left(\frac{\bm{y}+\bm{y}'}{2}\right)$$
	Since the feasible set $\{(y_1,\dots,y_n)|\sum_{i=1}^n y_i^2 \leq \e\}$ is convex, $\frac{\bm{y}+\bm{y}'}{2}=(y_1,\dots,\frac{y_i+y_j}{2},\dots,\frac{y_i+y_j}{2},\dots,y_n)$ lies in the feasible set. This means we can find a point $\frac{\bm{y}+\bm{y}'}{2}$ in the feasible set where $S(\frac{\bm{y}+\bm{y}'}{2})$ takes a value not less than the maximum $S(\bm{y})$. Therefore, the maximum must be attained on the point $\bm{y}^*=(y^*_1,\dots,y^*_n)$ where $y_i=y_j$ for all $0\leq i,j\leq n$.
	
	Plugging this into $\Sigma_{i=1}^n y_i^2 = \e$, we get $y_i=\sqrt{\e/n}$ for each $0\leq i\leq n$. So the maximum of $S(\bm{y})$ is $n\log(1+\sqrt{\e/n})$ when $\e$ is  equally allocated across each $y_i$.
	
	\textbf{Step 2. Solving $\mathrm{min}\ S(\bm{y})$.}
	
	We first resolve 2-dimensional case in \textbf{Step 2.1} and then generalize to high dimension problem in \textbf{Step 2.2}. 
	
	\textbf{Step 2.1. Resolving 2-dimensional problem.}
	
	Take any two $y_i$ and $y_j$ and note $S_{ij}(y_i,y_j)=\log(1+y_i)+\log(1+y_j)=\log(1+y_i)(1+y_j)$. 
	We want to solve the following problem.
	\begin{align}
		min\ &\ S_{ij}(y_i,y_j)=\log(1+y_i)+\log (1+y_j)\\
		s.t.\ &\ y_i^2 + y_j^2=c\\
		&\ 0 < c < 1/2\\
		&\ y_i,y_j \leq 0
	\end{align}
	
	Since $\log(x)$ is monotonically increasing, it suffices to minimize  $S_{ij}'(y_i,y_j)=(1+y_i)(1+y_j)$.
	Using the standard parameterization of the circle, the problem is transformed to 
	\begin{align}
		min\ &\ F(\theta)=(1+\sqrt{c}\cos \theta)(1+\sqrt{c}\sin \theta)\\
		s.t.\ &\ \pi \leq \theta \leq \dfrac{3\pi}{2}\\
		&\ 0< c < 1/2
	\end{align}
	
	We now show that $F(\theta)$ has only one stationary point in $\theta\in [\pi, \frac{3\pi}{2}]$.
	The first-order derivative of $F(\theta)$ is 
	\begin{align}
		&F'(\theta)\\
		&=\sqrt{c}(\cos \theta-\sin \theta)+c\cos 2\theta\\
		&=\sqrt{c}(\cos \theta-\sin \theta)+c(\cos \theta-\sin \theta)(\cos \theta+\sin \theta)\\
		&=(\cos \theta-\sin \theta)(\sqrt{c}+c(\cos \theta+\sin \theta))
	\end{align}
	
	To make $F'(\theta)=0$, either $\cos \theta-\sin \theta$ or $\sqrt{c}+c(\cos \theta+\sin \theta)$ must be 0.
	
	For the former case, $\theta=\frac{5\pi}{4}$ makes $\cos \theta=\sin \theta=-\frac{\sqrt{2}}{2}$ and $\cos \theta-\sin \theta=0$.
	
	For the latter case,  we show $\sqrt{c}+c(\cos \theta+\sin \theta)=0$ has no solution. In other words,
	\begin{align}
		\cos \theta+\sin \theta=\dfrac{-1}{\sqrt{c}} \label{eq:TC}
	\end{align}
	has no solution.
	
	Let $T(\theta)=\cos \theta+\sin \theta\ (\theta \in [\pi, \frac{3\pi}{2}])$. The second-order derivative $T''(\theta)=-\cos \theta-\sin \theta> 0$ on $[\pi, \frac{3\pi}{2}]$, so $T(\theta)$ is strictly convex and has only one minimum on  $[\pi, \frac{3\pi}{2}]$. Making $T'(\theta)=\cos \theta-\sin \theta=0$ yields $\theta=5\pi/4$. So the minimum of $T(\theta)$ is $T(\frac{5\pi}{4})=-\sqrt{2}$. However, from the condition $c<1/2$, we get $\frac{-1}{\sqrt{c}}<-\sqrt{2}$. This implies $\cos \theta+\sin \theta> \frac{-1}{\sqrt{c}}$ for $\theta \in [\pi, \frac{3\pi}{2}]$.
	So $\sqrt{c}+c(\cos \theta+\sin \theta)=0$ has no solution on $[\pi, \frac{3\pi}{2}]$. 
	
	Therefore, $F(\theta)$ has only one stationary point at $\theta=\frac{5\pi}{4}$. Since $F(\theta)$ is continuously differentiable, the minimum should be attained at the endpoints of the feasible interval $[\pi, \frac{3\pi}{2}]$ or the single stationary point.
	
	For end points, $F(\pi)=F(\frac{3\pi}{2})=1-\sqrt{c}$. On the stationary point, $F(\frac{5\pi}{4})=(1-\sqrt{c/2})^2$. The rest part is to compare $1-\sqrt{c}$ and $(1-\sqrt{c/2})^2$.
	
	Now we show $H(c)=(1-\sqrt{c/2})^2-(1-\sqrt{c})<0$ for $0<c<1/2$. It is easy to know $H''(c)=\dfrac{1}{4}(\sqrt{2}-1)c^{-3/2}>0$ for $0<c<1/2$. So $H(c)$ is convex. Making $H'(c)=0$ yields $c=(\sqrt{2}-1)^2$. The minimum of $H(c)$ is $H((\sqrt{2}-1)^2)=-\frac{1}{2}(\sqrt{2}-1)^2<0$. It is also easy to verify that $H(0)=0$ and $H(1/2)=\frac{2\sqrt{2}-3}{4}<0$. So we can know $H(c)<0$ for $0<c<1/2$. 
	
	Therefore, the minimum of $F(\theta)$ is $(1-\sqrt{c/2})^2$ at $\theta=\frac{5\pi}{4}$.
	
	Accordingly, $S_{ij}$ takes its minimum $2\log(1-\sqrt{c/2})$ when $y_i=y_j$.
	
	\textbf{Step 2.2. Resolving high-dimensional problem.}
	
	Now we generalize the above step to high dimensional problem. 
	
	Given any point $\bm{y}^{(1)}=(y_1^{(1)},\dots,y_n^{(1)})$ and $S(\bm{y}^{(1)})=\sum_{k=1}^n \log(1+y_k^{(1)})$, we choose $i_1,j_1=\arg\max\limits_{i,j} |(y_i^{(k)})^2-(y_j^{(k)})^2|$, then average $y_{i_1}$ and $y_{j_1}$ by 
	$$
	y_{i_1}^{(2)}=y_{j_1}^{(2)}=\sqrt{\frac{\left(y_{i_1}^{(1)}\right)^2+\left(y_{j_1}^{(1)}\right)^2}{2}}
	$$
	without affecting the sum $\left(y_{i_1}^{(1)}\right)^2+\left(y_{j_1}^{(1)}\right)^2=\left(y_{i_1}^{(2)}\right)^2+\left(y_{j_1}^{(2)}\right)^2=c\leq \e <1/2$ . Keeping other elements unchanged, we obtain $\bm{y}^{(2)}=(y_1^{(2)},\dots,y_n^{(2)})=(y_1^{(1)},\dots,y_{i_1}^{(2)},\dots,y_{j_1}^{(2)},\dots, y_n^{(1)})$. 
	From \textbf{Step 2.1}, we can know 
	\begin{align}\nonumber
		&S(\bm{y}^{(1)})\nonumber\\
		=&\log(1+y_{i_1}^{(1)})+\log(1+y_{j_1}^{(1)})+\sum_{k\neq i_1,j_1}\log(1+y_k^{(1)})\nonumber\\
		\geq& \log(1+y_{i_1}^{(2)})+\log(1+y_{j_1}^{(2)}) +\sum_{k\neq i_1,j_1}\log(1+y_k^{(1)})\nonumber\\
		=& S(\bm{y}^{(2)})\nonumber
	\end{align}
	
	Applying this step iteratively, we can obtain a sequence $\bm{y}^{(1)},\bm{y}^{(2)},\dots, \bm{y}^{(k)}$ satisfying $S(\bm{y}^{(i)})\geq S(\bm{y}^{(i+1)})\ (1\leq i \leq k-1)$. 	
	Plugging $\bm{x}^{(k)}=-\bm{y}^{(k)}$ into Lemma \ref{lem:pairaverage}, we can obtain $\lim\limits_{k\rightarrow \infty}\{\bm{y^{(k)}}\}=(y_1^*,\dots,y_n^*)$ where $y_i^*=-\sqrt{\e/n}$ for each $i$. 
	This indicates $\lim\limits_{k\rightarrow \infty}S(\bm{y}^{(k)})=n\log (1-\sqrt{\e/n})$.
	
	In summary, from any given point $\bm{y}^{(1)}=(y_1^{(1)},\dots,y_n^{(1)})$, we can find a sequence $\{\bm{y}^{(k)}\}$ such that $\lim\limits_{k\rightarrow \infty}S(\bm{y}^{(k)})=n\log (1-\sqrt{\e/n}) \leq S(\bm{y}^{(1)})$. 
	
	Therefore, the minimum of $S(\bm{y})$ is $n\log (1-\sqrt{\e/n})$.

	\textbf{Step 3. Comparing absolute values of min and max.}
	
	Now we have obtained $\max S(\bm{y})=n\log(1+\sqrt{\e/n})$ and $\min S(\bm{y})=n\log (1-\sqrt{\e/n})$. The rest is to compare the absolute values.
	
	It suffices to show $|n\log(1+\sqrt{\e/n})|<|n\log(1-\sqrt{\e/n})|$.
	Since the second-order derivative of function $\log(1+x)=-(1+x)^{-2}<0$, we can know $|\log(1+x)|$ grows faster when $x<0$ than $x>0$. 
	According to the Lagrange mean value theorem, we can know $|n\log(1-\sqrt{\e/n})|>|n\log(1+\sqrt{\e/n})|$. 
	
	Therefore, the absolute value of the minimum is larger than the maximum. This yields $|S(\bm{y})|\leq -n\log(1-\sqrt{\e/n})$ and completes the proof.
	$\hfill\blacksquare$
\end{proof}

The following Lemma \ref{lem:bound-deltamu-deltasigma} states that if the KL divergence between two Gaussian distributions is small, then the difference between their means and covariance matrices must be small too.
\begin{lemma}\label{lem:bound-deltamu-deltasigma}
	Given two $d-$dimensional Gaussian distributions $\N_1(\mu_1, \Sigma_1)$ and  $\N_2(\mu_2, \Sigma_2)$. If $KL(\N_1||\N_2)\leq \epsilon$ where $\e$ is sufficiently small, then $\norm{\mu_2-\mu_1}\leq  \|\Sigma_2^{1/2}\|_{\mathrm{op}}\sqrt{2\epsilon}$, $\|\Sigma_1-\Sigma_2\|_{F}\leq \|\Sigma_2\|_{\mathrm{op}}\sqrt{6\epsilon}$, and 
	$\|\Sigma_1^{-1}-\s_2^{-1}\|_{F} \leq \|\s_1^{-1}\|_{\mathrm{op}}\|\Sigma_2\|_{\mathrm{op}}\|\s_2^{-1}\|_{\mathrm{op}}\sqrt{6\epsilon}$,
	where $\Sigma_2^{1/2}$ is the square root of $\Sigma_2$, $\|\cdot\|_{\mathrm{op}}$ and $\|\cdot\|_{F}$ are the operator norm and Frobenius norm of matrix, respectively.
\end{lemma}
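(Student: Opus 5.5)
We will start from the closed form
$$KL(\N_1||\N_2)=\tfrac12\left[\Tr(\s_2^{-1}\s_1)-d-\log\det(\s_2^{-1}\s_1)+(\mu_2-\mu_1)^\top\s_2^{-1}(\mu_2-\mu_1)\right].$$
Let $\lambda_1,\dots,\lambda_d>0$ be the eigenvalues of $\s_2^{-1/2}\s_1\s_2^{-1/2}$, which has the same spectrum as $\s_2^{-1}\s_1$. Then the covariance part equals $\frac12\sum_i f(\lambda_i)$ with $f(x)=x-\log x-1$. By Lemma~\ref{lemma:f} each $f(\lambda_i)\ge 0$, and the mean term is also nonnegative. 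Hence both pieces are separately bounded by $\epsilon$:
$$(\mu_2-\mu_1)^\top\s_2^{-1}(\mu_2-\mu_1)\le 2\epsilon, \qquad \sum_i f(\lambda_i)\le 2\epsilon.$$

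\textbf{Mean bound.} Write $\mu_2-\mu_1=\s_2^{1/2}v$, where $\norm{v}^2$ equals the quadratic form above, so $\norm{v}\le\sqrt{2\epsilon}$. This gives $\norm{\mu_2-\mu_1}\le\|\s_2^{1/2}\|_{\mathrm{op}}\sqrt{2\epsilon}$ directly.

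\textbf{Covariance bound.} This is the step needing care, since Lemma~\ref{lemma:f-lower-bound} only applies on $[0.5,1.5]$.
\begin{enumerate}
\item Since $f$ is convex with its unique minimum at $1$, we have $f(x)\ge\min(f(0.5),f(1.5))>0$ for all $x\notin[0.5,1.5]$. So once $2\epsilon<\min(f(0.5),f(1.5))$ (this is what ``sufficiently small'' means), every $\lambda_i$ lies in $[0.5,1.5]$.
\item Lemma~\ref{lemma:f-lower-bound} then gives $\sum_i(\lambda_i-1)^2\le 3\sum_i f(\lambda_i)\le 6\epsilon$, i.e. $\|\s_2^{-1/2}\s_1\s_2^{-1/2}-I\|_F\le\sqrt{6\epsilon}$.
\item Write $\s_1-\s_2=\s_2^{1/2}(\s_2^{-1/2}\s_1\s_2^{-1/2}-I)\s_2^{1/2}$ and use $\|ABC\|_F\le\|A\|_{\mathrm{op}}\|B\|_F\|C\|_{\mathrm{op}}$. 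This yields $\|\s_1-\s_2\|_F\le\|\s_2\|_{\mathrm{op}}\sqrt{6\epsilon}$.
\end{enumerate}

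\textbf{Inverse bound.} Use the identity $\s_1^{-1}-\s_2^{-1}=\s_1^{-1}(\s_2-\s_1)\s_2^{-1}$. The same Frobenius–operator submultiplicativity gives
$$\|\s_1^{-1}-\s_2^{-1}\|_F\le\|\s_1^{-1}\|_{\mathrm{op}}\|\s_2^{-1}\|_{\mathrm{op}}\,\|\s_2\|_{\mathrm{op}}\sqrt{6\epsilon},$$
which is the claimed bound.

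The only real obstacle is confining the eigenvalues to $[0.5,1.5]$ so that the quadratic lower bound on $f$ applies; convexity of $f$ handles this. The remaining steps are routine norm inequalities.
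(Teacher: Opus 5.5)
Your proof is correct and follows essentially the same route as the paper: the same split of the Gaussian KL into the mean quadratic form and $\sum_i f(\lambda_i)$, Lemma~\ref{lemma:f-lower-bound} to get $\|\Sigma_2^{-1/2}\Sigma_1\Sigma_2^{-1/2}-I\|_F\le\sqrt{6\epsilon}$, and the sandwich $\|ABC\|_F\le\|A\|_{\mathrm{op}}\|B\|_F\|C\|_{\mathrm{op}}$ for both covariance bounds. The one place you are more careful than the paper is the confinement $\lambda_i\in[0.5,1.5]$: the paper simply assumes it, whereas you derive it from the monotonicity of $f$ on either side of $1$, under the explicit condition $2\epsilon<\min(f(0.5),f(1.5))=f(1.5)\approx 0.094$.
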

\begin{proof}
	
	The KL divergence between $\N_1$ and $\N_2$  has the following closed form.
	\begin{equation}
		\begin{aligned}\nonumber
			&KL(\N_1||\N_2) \\
			& = \frac{1}{2}\left[(\mu_2-\mu_1)^\top \Sigma_2^{-1}(\mu_2-\mu_1)+\Tr(\Sigma_2^{-1}\Sigma_1)\right.\\
			& \left. -\log \det(\Sigma_2^{-1}\Sigma_1)-d\right]
		\end{aligned}
	\end{equation}
	where $\Tr$ is the trace of matrix, $\det$ is the determinant of matrix.
	
	Let $\Delta \mu=\mu_2-\mu_1$, $A=\Sigma_2^{-\frac{1}{2}}\Sigma_1\Sigma_2^{-\frac{1}{2}}$. 
	$A$ is positive definite. We have 
	\begin{equation}
		\begin{aligned}\label{eq:aboutA}
			\Tr(A)&=\Tr(\Sigma_2^{-\frac{1}{2}}\Sigma_1\Sigma_2^{-\frac{1}{2}})=\Tr(\Sigma_2^{-1}\Sigma_1)\\
			\det(A)&=\det(\Sigma_2^{-\frac{1}{2}}\Sigma_1\Sigma_2^{-\frac{1}{2}})=\det(\Sigma_2^{-1}\Sigma_1)
		\end{aligned}
	\end{equation}

	The condition $KL(\N_1||\N_2)\leq \epsilon$ can be rewritten as
	\begin{equation}\label{eq:KLN1N2-small}
		\begin{aligned}
			\frac{1}{2}\left[\Delta\mu^{\top}\Sigma_2^{-1}\Delta\mu+\Tr(A)-\log\det(A)-d\right]
			&\leq \epsilon
		\end{aligned}
	\end{equation}
	Note the eigenvalues of $A$ as $\lambda_i\ (1\leq i \leq d)$ , then
	\begin{equation}
		\begin{aligned} \nonumber
			&\Tr(A)-\log\det(A)-d=\sum_{i=1}^d \lambda_i-\log\prod_{i=1}^d\lambda_i-d\\
			&=\sum_{i=1}^d(\lambda_i-\log\lambda_i-1)=\sum_{i=1}^d f(\lambda_i)\geq 0
		\end{aligned}
	\end{equation}
	Since $\Sigma_2^{-1}$ is positive definite, so $\Delta\mu^{\top}\Sigma_2^{-1}\Delta\mu\geq 0$. The condition in Inquality \eqref{eq:KLN1N2-small} implies the following two inequalities.
	\begin{align}
		\Delta\mu^{\top}\Sigma_2^{-1}\Delta\mu\leq  2\epsilon \label{eq:quadratic-bound} \\
		\sum_{i=1}^d(\lambda_i-\log\lambda_i-1)=\sum_{i=1}^d f(\lambda_i)\leq 2\epsilon  \label{eq:sumf-bound}
	\end{align}
	
	In the following \textbf{steps 1} and \textbf{2}, we prove the bounds for $\norm{\mu_2-\mu_1}$ and $\|\Sigma_1-\Sigma_2\|_{F}$, respectively. The bounds for $\|\Sigma_1^{-1}-\s_2^{-1}\|_{F}$ is derived from bounded $\|\Sigma_1-\Sigma_2\|_{F}$.
	
	\textbf{Step 1. Bounding  $\norm{\mu_2-\mu_1}$.}
	
	Let $\lambda'_{min}$ and $\lambda'_{max}$ denote the minimum and maximum eigenvalues of $\Sigma_2^{-1}$, respectively. $\lambda_{max}$ be the maximum eigenvalue of $\Sigma_2$. Then the following inequality holds.
	$$
	\lambda'_{min}\norm{\Delta \mu}^2 \leq \Delta\mu^{\top}\Sigma_2^{-1}\Delta\mu \leq \lambda'_{max}\norm{\Delta \mu}^2
	$$
	Combining Inequality \eqref{eq:quadratic-bound}, we can know 
	\begin{align}
		\norm{\Delta\mu}^2\leq \frac{2\epsilon}{\lambda'_{min}}
	\end{align}
	Since $\Sigma_2^{-1}(\Sigma_2)$ is positive definite, $\frac{1}{\lambda'_{min}}=\lambda_{max}=\norm{\Sigma_2}_{\mathrm{op}}=\|\Sigma_2^{1/2}\|^2_{\mathrm{op}}$. Therefore, we get the following bound for $\norm{\Delta\mu}$
	\begin{align}\label{eq:bound-delta-mu}
		\norm{\Delta\mu} \leq \|\Sigma_2^{1/2}\|_{\mathrm{op}}\sqrt{2\epsilon}
	\end{align}

	\textbf{Step 2. Bounding  $\|\Sigma_1-\Sigma_2\|_{F}$.}
	
	Since both $\Sigma_1$ and $\Sigma_2$ are positivie definite, the following equality holds.
	\begin{align}
		\|\Sigma_1-\Sigma_2\|_{F} &=\|\s_2^{1/2}(\s_2^{-1/2}\s_1\s_2^{-1/2}-\mathbf{I})\s_2^{1/2}\|_F\nonumber\\
		&=\|\s_2^{1/2}(A-\mathbf{I})\s_2^{1/2}\|_F\nonumber\\
		&\leq \|\s_2^{1/2}\|_{\mathrm{op}}\|(A-\mathbf{I})\|_F\|\s_2^{1/2}\|_{\mathrm{op}} \label{eq:ABC-F-op}\\
		&=\|\s_2\|_{\mathrm{op}}\|(A-\mathbf{I})\|_F \label{eq:s1-s2-bound-A-I}
	\end{align}
	where Inequality \eqref{eq:ABC-F-op} follows from $\|ABC\|_F\leq \|A\|_{\mathrm{op}}\| B\|_F \|C\|_{\mathrm{op}}$ for any three matrices \cite{bhatia2013matrix}.
	
	Now it suffices to show $\|A-\mathbf{I}\|_F$ is also bounded by a small value.
	In the following, we show $\|A-\mathbf{I}\|_F=\sum_{i=1}^d(\lambda_i-1)^2$.
	
	By the definition of $A=\s_2^{-1/2}\s_1\s_2^{-1/2}$, $A$ is positive definite, and therefore can be orthogonally decomposed as 
	$A=U\Lambda U^\top$, where $U$ is orthogonal matrix and $\Lambda=diag(\lambda_1,\dots,\lambda_d)$. Thus,
	\begin{align}
		A-\mathbf{I} & = U\Lambda U^\top-U\mathbf{I} U^\top=U(\Lambda-\mathbf{I})U^{\top}
	\end{align}
	Therefore,
	\begin{align}
		&\|A-\mathbf{I}\|^2_F \nonumber\\
		& = \|U(\Lambda-\mathbf{I})U^{\top}\|^2_F\nonumber\\
		& = \Tr((U(\Lambda-\mathbf{I})U^{\top})^{\top}U(\Lambda-\mathbf{I})U^{\top})\tag{by the definition of $\|\cdot\|_F$ and trace}\\
		& = \Tr(U(\Lambda-\mathbf{I})^{\top}U^{\top}U(\Lambda-\mathbf{I})U^{\top})\nonumber\\
		& = \Tr(U(\Lambda-\mathbf{I})^{\top}(\Lambda-\mathbf{I})U^{\top})\nonumber\\
		& = \Tr((\Lambda-\mathbf{I})^{\top}(\Lambda-\mathbf{I}))\nonumber\\
		%& = \Tr(diag(\lambda_1-1,\dots,\lambda_d-1)^{\top}diag(\lambda_1-1,\dots,\lambda_d-1))\nonumber\\
		& = \sum_{i=1}^d(\lambda_i-1)^2 \label{eq:A-I-sum}
	\end{align}
	where the last equation follows from $\Lambda-\mathbf{I}=diag(\lambda_1-1,\dots,\lambda_d-1)$.
	
	By Lemma \ref{lemma:f}, $f(x)=x-\log x-1$ is convex and takes its minimum $f(1)=0$. 
	Inequality \eqref{eq:sumf-bound} guarantees that each $f(\lambda_i)$ is small. Therefore, it is reasonable to expect that each $\lambda_i$ is close to 1.
	Accordingly, we assume $\lambda_i\in [0.5, 1.5]$. Combining Lemma \ref{lemma:f-lower-bound} and Inequality \eqref{eq:sumf-bound} yields 
	\begin{align}
		&\sum_{i=1}^d \frac{(\lambda_i-1)^2}{3} \leq \sum_{i=1}^d f(\lambda_i)\leq 2\epsilon \nonumber\\
		\Longrightarrow & \sum_{i=1}^d (\lambda_i-1)^2  \leq 6\epsilon \label{eq:sum-quadratic-lambda-bound}
	\end{align}
	
	Further combining Equations \eqref{eq:A-I-sum} and  \eqref{eq:sum-quadratic-lambda-bound}, we get
	\begin{align}
		\|A-\mathbf{I}\|_F \leq \sqrt{6\e} \label{eq:A-I-6e}
	\end{align}
	Plugging Inequality \eqref{eq:A-I-6e} into \eqref{eq:s1-s2-bound-A-I}, we obtain
	\begin{align}
		\|\Sigma_1-\Sigma_2\|_{F} & \leq \|\s_2\|_{\mathrm{op}}\|(A-\mathbf{I})\|_F
		\leq \|\s_2\|_{\mathrm{op}}\sqrt{6\e} \label{eq:s1-s2-bound}
	\end{align}
	
	Finally, the bounds of $\|\Sigma_1^{-1}-\s_2^{-1}\|_{\mathrm{op}}$ can be inferred by
	\begin{align}
		\|\Sigma_1^{-1}-\s_2^{-1}\|_{\mathrm{op}} & =\|\s_2^{-1}(\s_2-\s_1)\s_1^{-1}\|_{\mathrm{op}}\nonumber\\
		& \leq \|\s_2^{-1}(\s_2-\s_1)\s_1^{-1}\|_{F} \tag{by $\| \cdot \|_{\mathrm{op}} \leq \|\cdot\|_F$} \\
		& \leq \|\s_2^{-1}\|_{\mathrm{op}}\|\s_1^{-1}\|_{\mathrm{op}}\|\s_2-\s_1\|_F \label{eq:ABC_F_bound}\\
		& \leq \|\s_2^{-1}\|_{\mathrm{op}}\|\s_1^{-1}\|_{\mathrm{op}}\|\s_2\|_{\mathrm{op}}\sqrt{6\e}
	\end{align}
	where Inequality \eqref{eq:ABC_F_bound} follows from $\|ABC\|_F\leq \|A\|_{\mathrm{op}}\|B\|_F\|C\|_{\mathrm{op}}$ for any three matrices \cite{bhatia2013matrix}.
	$\hfill\blacksquare$
\end{proof}

\subsection{Main Theorem}
Now we present the main theorem as follows.
\begin{theorem}\label{thm:triangle-P-N1-N2}
	Given two $d$-dimensional multivariate Gaussian distributions $\mathcal{N}_1(\mu_1,\Sigma_1)$, $\mathcal{N}_2(\mu_2,\Sigma_2)$, and an arbitrary distribution $P$ with finite second moment, if $KL(P||\N_1)>C$ and $KL(\N_1||\N_2) \leq \epsilon$, where $C$ is a large constant and $\epsilon<\frac{1}{12}$ is sufficently small, then
	$$
	KL(P||N_2)\ge C-O(\sqrt{\e})
	$$
	where the factor of $\sqrt{\e}$ is dependent on the parameters of $\N_1, \N_2$ and the first and second moments of $P$.
	The $\sqrt{\epsilon}$ dependence is tight in general (see Proposition \ref{thm:tightness}).
\end{theorem}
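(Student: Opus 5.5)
The plan is to write the difference $KL(P||\N_2)-KL(P||\N_1)$ explicitly. The entropy term of $P$ cancels, and what remains depends on $P$ only through its first two moments. Writing $m=\mathbb{E}_P[x]$ and $S=\mathbb{E}_P[(x-m)(x-m)^\top]$, which are finite by assumption, we have $KL(P||\N_i)=-H(P)+\frac{d}{2}\log(2\pi)+\frac12\log\det\Sigma_i+\frac12\mathbb{E}_P[(x-\mu_i)^\top\Sigma_i^{-1}(x-\mu_i)]$. Using $\mathbb{E}_P[(x-\mu)^\top M(x-\mu)]=\Tr(MS)+(m-\mu)^\top M(m-\mu)$, this gives
\begin{align}
	&KL(P||\N_2)-KL(P||\N_1) = \frac12\log\frac{\det\Sigma_2}{\det\Sigma_1} + \frac12\Tr\big((\Sigma_2^{-1}-\Sigma_1^{-1})S\big)\nonumber\\
	&\quad + \frac12\left[(m-\mu_2)^\top\Sigma_2^{-1}(m-\mu_2)-(m-\mu_1)^\top\Sigma_1^{-1}(m-\mu_1)\right].\nonumber
\end{align}
If $H(P)$ is $-\infty$ (for instance, $P$ has no density), both KL divergences are infinite and there is nothing to prove. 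So the task reduces to bounding the absolute value of each of the three terms by a constant times $\sqrt{\e}$. We then get $KL(P||\N_2)\ge KL(P||\N_1)-O(\sqrt{\e})>C-O(\sqrt{\e})$.

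\textbf{Log-determinant term.} We have $\log\det\Sigma_2-\log\det\Sigma_1=-\log\det A=-\sum_i\log\lambda_i$, with $A$ and $\lambda_i$ as in the proof of Lemma \ref{lem:bound-deltamu-deltasigma}. That proof gives $\sum_i(\lambda_i-1)^2\le 6\e<\frac12$, since $\e<\frac1{12}$; this is presumably exactly why the threshold $\frac1{12}$ appears in the statement. Lemma \ref{lemma:bound-sum-log-x} then yields $|\sum_i\log\lambda_i|\le -d\log(1-\sqrt{6\e/d})=O(\sqrt{d\e})$. The step of that proof that assumes $\lambda_i\in[0.5,1.5]$ should be justified rather than assumed. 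I would argue it as follows: $f(\lambda_i)\le 2\e<\frac16$, and since $f(0.5)=f(1.5)$-type values exceed $\frac16$ (indeed $f(0.5)\approx0.19$ and $f(1.5)\approx0.095$), this forces $\lambda_i$ into a neighbourhood of 1. If the neighbourhood given by $f^{-1}([0,\frac16])$ is slightly wider than $[0.5,1.5]$ on the right, I would replace the constant 3 in Lemma \ref{lemma:f-lower-bound} by a slightly larger constant valid on that interval. This bookkeeping of constants is the main technical obstacle I expect, but it only changes the constant in front of $\sqrt{\e}$.

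\textbf{Trace term.} By Cauchy--Schwarz in the Frobenius inner product, $|\Tr((\Sigma_2^{-1}-\Sigma_1^{-1})S)|\le\|\Sigma_1^{-1}-\Sigma_2^{-1}\|_F\|S\|_F$. Lemma \ref{lem:bound-deltamu-deltasigma} bounds the first factor by $\|\Sigma_1^{-1}\|_{\mathrm{op}}\|\Sigma_2\|_{\mathrm{op}}\|\Sigma_2^{-1}\|_{\mathrm{op}}\sqrt{6\e}$. The second factor is finite by the moment assumption, so this term is $O(\sqrt{\e})$.

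\textbf{Mean term.} I write $m-\mu_2=(m-\mu_1)-\Delta\mu$ and expand. The difference of quadratic forms becomes
\[
(m-\mu_1)^\top(\Sigma_2^{-1}-\Sigma_1^{-1})(m-\mu_1)-2\Delta\mu^\top\Sigma_2^{-1}(m-\mu_1)+\Delta\mu^\top\Sigma_2^{-1}\Delta\mu .
\]
The first piece is at most $\|\Sigma_2^{-1}-\Sigma_1^{-1}\|_{\mathrm{op}}\|m-\mu_1\|^2=O(\sqrt{\e})$. The second is at most $2\|\Sigma_2^{-1}\|_{\mathrm{op}}\|m-\mu_1\|\,\|\Sigma_2^{1/2}\|_{\mathrm{op}}\sqrt{2\e}=O(\sqrt{\e})$, using Lemma \ref{lem:bound-deltamu-deltasigma}. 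The third is nonnegative, so it can be dropped when proving the lower bound; in any case it is at most $2\e$ by \eqref{eq:quadratic-bound}.

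Summing the three estimates gives the claim, with the constant depending on $d$, $\|\Sigma_i^{\pm1}\|_{\mathrm{op}}$, $\|m-\mu_1\|$, and $\|S\|_F$, as the statement indicates. The tightness part is deferred to Proposition \ref{thm:tightness}. A natural witness there is a one-dimensional Gaussian example with a mean shift: take $P=\N(a,1)$, $\N_1=\N(0,1)$ and $\N_2=\N(\delta,1)$ with $\delta=\sqrt{2\e}$. Then $KL(P||\N_2)-KL(P||\N_1)=-a\delta+\delta^2/2$, which is of order $-\sqrt{\e}$.
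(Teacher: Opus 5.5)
Your proposal is correct and is essentially the paper's proof: both reduce to bounding $\mathbb{E}_P[\log(\N_1/\N_2)]$, a quadratic polynomial in $x$, term by term via Lemmas~\ref{lemma:bound-sum-log-x} and~\ref{lem:bound-deltamu-deltasigma}. Your centering at $m$ with covariance $S$ only changes the bookkeeping, and it yields translation-invariant constants in $\|m-\mu_1\|$ and $\|S\|_F$ instead of the paper's $\|\mu_1\|$, $\|\mu_2\|$, $\mathbb{E}_P\|x\|$, $\mathbb{E}_P\|x\|^2$.

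One correction to your repair of the unjustified step $\lambda_i\in[0.5,1.5]$: enlarging the constant $3$ is unnecessary, and it is also harmful. Any constant $c>3$ gives only $\sum_i(\lambda_i-1)^2\le 2c\e$, which can exceed the $\frac12$ required by Lemma~\ref{lemma:bound-sum-log-x} as $\e\to\frac1{12}$, so it changes more than the constant in front of $\sqrt{\e}$. The constant $3$ survives as it stands. The function $H$ from the proof of Lemma~\ref{lemma:f-lower-bound} decreases on $(0,1]$ and on $[1.5,\infty)$, increases on $[1,1.5]$, and satisfies $H(1)=0<H(1.75)\approx0.003$, so $H\ge0$ on $(0,1.75]$. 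Moreover $f(1.75)\approx0.19>\frac16>2\e\ge f(\lambda_i)$ forces $\lambda_i<1.75$. Hence $f(\lambda_i)\ge(\lambda_i-1)^2/3$ for every $i$, and $\sum_i(\lambda_i-1)^2\le6\e<\frac12$ holds under the stated threshold.
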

%\begin{remark}
%	In Theorem \ref{thm:triangle-P-N1-N2}, $d$ is constant and the bound is essentially $C-O(\sqrt{\e})$. We retain $d$ in the bound because the theorem may be applied in high-dimensional problems where $d$ is not necessarily small. Nevertheless, the term $O(\sqrt{\e})$ grows only mildly with $d$, as it is scaled by $\e$ and further attenuated by the square-root dependence on the dimension.
%\end{remark}
\begin{proof}
	\textbf{Step 1. Rewriting KL terms}. 
	
	According to the definition of KL divergence, 
	\begin{equation}
		\begin{aligned}\label{eq:rewrite-KLP-N2}
			KL(P||\N_2) & = \int P(x) \log \left(\frac{P(x)}{\N_2(x)}\right) dx\\
			&=\int P(x) \log \left(\frac{P(x)}{\N_1(x)}\frac{\N_1(x)}{\N_2(x)}\right) dx\\
			&=\int P(x) \log \frac{P(x)}{\N_1(x)} dx + \int P(x) \log \frac{\N_1(x)}{\N_2(x)} dx\\
			&=KL(P||\N_1) + \mathbb{E}_P\left[\log \frac{\N_1(x)}{\N_2(x)}\right]
		\end{aligned}
	\end{equation}
	Considering the condition $KL(P||\N_1)>C$, it suffices to show $\mathbb{E}_P\left[\log \frac{\N_1(x)}{\N_2(x)}\right]$ is bounded.
	
	The log probability of $\N(\mu,\Sigma)$ is
	$$
	\log \N(x) = -\frac{1}{2}(x-\mu)^\top \Sigma^{-1}(x-\mu)-\frac{1}{2}\log\det(\Sigma)-\frac{d}{2}\log 2\pi
	$$
	Thus, the $\log\frac{\N_1(x)}{\N_2(x)}$ term has the following closed form.
	\begin{equation}
		\begin{aligned}\label{eq:log-N1-N2}
			&\log\frac{\N_1(x)}{\N_2(x)}\\
			& = \log \N_1(x)-\log \N_2(x)\\
			& = \frac{1}{2}\log \frac{\det(\Sigma_2)}{\det(\Sigma_1)} +\frac{1}{2}\left[(x-\mu_2)^\top \Sigma_2^{-1}(x-\mu_2)\right.\\
			&\ \ \left. -(x-\mu_1)^\top \Sigma_1^{-1}(x-\mu_1)\right]\\
			& = \frac{1}{2}\log \frac{\det(\Sigma_2)}{\det(\Sigma_1)}  +\frac{1}{2}(\mu_2^\top\Sigma_2^{-1}\mu_2-\mu_1^\top\Sigma_1^{-1}\mu_1)\\
			& \ \ +(\Sigma_1^{-1}\mu_1-\Sigma_2^{-1}\mu_2)^\top x + \frac{1}{2}x^{\top}(\Sigma_2^{-1}-\Sigma_1^{-1})x\\
			&=a + b^{\top} x +x^\top M x 
		\end{aligned}
	\end{equation}
	where 
	\begin{equation}
		\begin{aligned}\label{eq:define-a-b-c}
			a &=  \frac{1}{2}\log \frac{\det(\Sigma_2)}{\det(\Sigma_1)}  +\frac{1}{2}(\mu_2^\top\Sigma_2^{-1}\mu_2-\mu_1^\top\Sigma_1^{-1}\mu_1)\\
			b & = \Sigma_1^{-1}\mu_1-\Sigma_2^{-1}\mu_2\\
			M &= \frac{1}{2}(\Sigma_2^{-1}-\Sigma_1^{-1})
		\end{aligned}
	\end{equation}
	Based on Equation \eqref{eq:log-N1-N2}, $\mathbb{E}_P\left[\log \frac{\N_1(x)}{\N_2(x)}\right]$ can be rewritten as
	\begin{equation}
		\begin{aligned}
			\mathbb{E}_P\left[\log \frac{\N_1(x)}{\N_2(x)}\right] & = a +\mathbb{E}_P\left[b^\top x\right]+\mathbb{E}_P\left[x^\top M x\right] \\
			& = a + b^\top \mathbb{E}_P\left[x\right] +\mathbb{E}_P\left[x^\top M x\right]
		\end{aligned}
	\end{equation}	
	Thus 
	\begin{align}\label{eq:bound-EP-logterm-abc}
		&\left|\mathbb{E}_P\left[\log \frac{\N_1(x)}{\N_2(x)}\right]\right| \\
		&\leq 
		\left|a\right| + \left|b^\top \mathbb{E}_P\left[x\right]\right| +\left|\mathbb{E}_P\left[x^\top M x\right]\right|\\
		& \leq \left|a\right| +\|b\| \|\mathbb{E}_P\left[x\right]\|+\left|\mathbb{E}_P\left[x^\top M x\right]\right| \tag{by Cauchy–Schwarz inequality}
	\end{align}
	In the following several steps, we  prove  $\left|a\right|$, $\|b\| \|\mathbb{E}_P\left[x\right]\|$, and $\left|\mathbb{E}_P\left[x^\top M x\right]\right|$ are all bounded.
	
	\textbf{Step 2. Bounding $\left|a\right|$, $\|b\| \left|\mathbb{E}_P\left[x\right]\right|$, and $\left|\mathbb{E}_P\left[x^\top M x\right]\right|$.}
	
	\textbf{Step 2.1. Bounding $\left|a\right|$.}
	
	By definition,
	\begin{equation}
		\begin{aligned}
			|a| &=  \left|\frac{1}{2}\log \frac{\det(\Sigma_2)}{\det(\Sigma_1)} +\frac{1}{2}(\mu_2^\top\Sigma_2^{-1}\mu_2-\mu_1^\top\Sigma_1^{-1}\mu_1)\right|\\
			& \leq  \frac{1}{2}\underbrace{\left|\log \frac{\det(\Sigma_2)}{\det(\Sigma_1)}\right|}_{T_1}  +\frac{1}{2}\underbrace{\left|(\mu_2^\top\Sigma_2^{-1}\mu_2-\mu_1^\top\Sigma_1^{-1}\mu_1)\right|}_{T_2}\label{eq:decompose-a}
		\end{aligned}
	\end{equation}
	For term $T_1$, 
	\begin{align}\label{eq:T1}
		&T_1 =\left|\log \frac{\det(\Sigma_2)}{\det(\Sigma_1)}\right|  = \left|\log \det(\s_2^{-1}\s_1)\right|
		= \left|\log\det(A)\right| \\
		&= \left|\log \prod_{i=1}^{d}\lambda_i\right|
		= \left|\sum_{i=1}^{d}\log \lambda_i\right|
	\end{align}
	where the second equation follows from Equation \text{\eqref{eq:aboutA}}.
	In Inequality \eqref{eq:sum-quadratic-lambda-bound}, we have shown $\sum_{i=1}^d (\lambda_i-1)^2  \leq 6\epsilon<1/2$. Therefore, we can apply lemma \ref{lemma:bound-sum-log-x} to Equation \eqref{eq:T1} and obtain 
	\begin{align}
		T_1=\left|\log \frac{\det(\Sigma_2)}{\det(\Sigma_1)}\right| & \leq -d\log\left(1-\sqrt{\frac{6\e}{d}}\right) \\
		&= \sqrt{6d\e}+3\e+O(\e^{3/2})\label{eq:bound-T1}\\
	\end{align}
	where the last equation is obtained by applying the Taylor expansion of $\log x$ at $x=1$.
	
	Next, it suffices to find a bound for term $T_2$.
	\begin{align}\label{eq:decompose-T2}
		&T_2=\left|\mu_2^\top\Sigma_2^{-1}\mu_2-\mu_1^\top\Sigma_1^{-1}\mu_1\right|\\
		& = \left|\mu_2^\top\Sigma_2^{-1}\mu_2-\mu_1^{\top}\s_2^{-1}\mu_1+\mu_1^{\top}\s_2^{-1}\mu_1-\mu_1^\top\Sigma_1^{-1}\mu_1\right|\nonumber\\
		& = \left|\mu_2^\top\Sigma_2^{-1}\mu_2-\mu_1^{\top}\s_2^{-1}\mu_1 + \mu_1^{\top}(\s_2^{-1}-\s_1^{-1})\mu_1\right|\nonumber\\
		& \leq \underbrace{\left|\mu_2^\top\Sigma_2^{-1}\mu_2-\mu_1^{\top}\s_2^{-1}\mu_1\right|}_{T_{21}} + \underbrace{\left|\mu_1^{\top}(\s_2^{-1}-\s_1^{-1})\mu_1\right|}_{T_{22}}
	\end{align}	
	It suffices to bound $T_{21}$ and $T_{22}$, respectively.
	
	Using notation $\Delta \mu=\mu_2-\mu_1$, we can bound $T_{21}$ as follows.
	\begin{align}
		&T_{21} \\
		& = \left|\mu_2^\top\Sigma_2^{-1}\mu_2-(\mu_2-\Delta\mu)^{\top}\s_2^{-1}(\mu_2-\Delta\mu)\right|\nonumber\\
		& = \left|\mu_2^\top\Sigma_2^{-1}\mu_2-(\mu_2^\top\Sigma_2^{-1}\mu_2-\Delta\mu^\top \s_2^{-1}\mu_2-\mu_2^\top\s_2^{-1}\Delta\mu \right. \nonumber\\
		&\left. \ \ \ \   +\Delta\mu^{\top}\s_2^{-1}\Delta\mu)\right|\nonumber\\
		& = \left|2\Delta\mu^\top \s_2^{-1}\mu_2-\Delta\mu^{\top}\s_2^{-1}\Delta\mu\right|\nonumber\\
		& \leq \left|2\Delta\mu^\top \s_2^{-1}\mu_2\right|+\left|\Delta\mu^{\top}\s_2^{-1}\Delta\mu\right|\nonumber\\
		& \leq 2\|\Delta\mu\|\|\s_2^{-1}\|_{\mathrm{op}}\|\mu_2\| + \|\s_2^{-1}\|_{\mathrm{op}}\|\Delta\mu\|^2\nonumber\\
		& \leq 2 \|\Sigma_2^{1/2}\|_{\mathrm{op}}\sqrt{2\epsilon}\|\s_2^{-1}\|_{\mathrm{op}}\|\mu_2\| + \|\s_2^{-1}\|_{\mathrm{op}}(\|\Sigma_2^{1/2}\|_{\mathrm{op}}\sqrt{2\epsilon})^2 \tag{by Lemma \ref{lem:bound-deltamu-deltasigma}}\\
		& = 2\sqrt{2}\|\Sigma_2^{1/2}\|_{\mathrm{op}}\|\s_2^{-1}\|_{\mathrm{op}}\|\mu_2\|\sqrt{\epsilon} + 2\|\s_2^{-1}\|_{\mathrm{op}}\|\Sigma_2\|_{\mathrm{op}}\epsilon \label{eq:bound-T21}\\
		%& = F_1\sqrt{\e}+F_2\e \\
		& = O(\sqrt{\e}) 
	\end{align}
	where Equation \eqref{eq:bound-T21} follows from $\|\s_2\|_{\mathrm{op}}=\|\s_2^{1/2}\|_{\mathrm{op}}^2$.
	
	$T_{22}$ can also be bounded by
	\begin{align}
		T_{22} & = \left|\mu_1^{\top}(\s_2^{-1}-\s_1^{-1})\mu_1\right| \nonumber\\
		& \leq \|\s_2^{-1}-\s_1^{-1}\|_{\mathrm{op}}\|\mu_1\|^2\nonumber\\
		& \leq \|\s_2^{-1}-\s_1^{-1}\|_{F}\|\mu_1\|^2\nonumber\\
		%& = \|\s_1^{-1}(\s_1-\s_2)\s_2^{-1}\|_{\mathrm{op}} \|\mu_1\|^2\nonumber\\
		%& \leq \|\s_1^{-1}\|_{\mathrm{op}}\|\s_2^{-1}\|_{\mathrm{op}}\|\s_1-\s_2\|_F \|\mu_1\|^2 %\label{eq:sigmadif-1-diff}\\
		& \leq \sqrt{6}\|\s_1^{-1}\|_{\mathrm{op}}\|\s_2^{-1}\|_{\mathrm{op}}\|\Sigma_2\|_{\mathrm{op}} \|\mu_1\|^2\sqrt{\epsilon} \tag{by Lemma \ref{lem:bound-deltamu-deltasigma} }\\
		%& = F_3 \sqrt{\epsilon}  \\
		& = O(\sqrt{\epsilon} \label{eq:bound-T22})
	\end{align}
	where the second inequality follows from $\|\cdot \|_{\mathrm{op}}\leq \|\cdot\|_{F}$.
	%where Inequality \eqref{eq:sigmadif-1-diff} follows from $\|ABC\|_F\leq \|A\|_{\mathrm{op}}\|B\|_F\|C\|_{\mathrm{op}}$ for any three matrices \cite{bhatia2013matrix}.
	
	Now plugging Inequalities \eqref{eq:bound-T1}, \eqref{eq:bound-T21}, and \eqref{eq:bound-T22} into Inequalities \eqref{eq:decompose-a} and \eqref{eq:decompose-T2}, we can obtain
	\begin{align}
		|a|& \\
		\leq &\frac{1}{2}T_1+\frac{1}{2}T_{2}\nonumber\\
		\leq &\frac{1}{2}T_1+\frac{1}{2}(T_{21}+T_{22})\nonumber\\
		\leq &\frac{1}{2}(\sqrt{6d\e}+3\e+O(\e^{3/2})) \\
		&+\sqrt{2}\|\Sigma_2^{1/2}\|_{\mathrm{op}}\|\s_2^{-1}\|_{\mathrm{op}}\|\mu_2\|\sqrt{\epsilon} + \|\s_2^{-1}\|_{\mathrm{op}}\|\Sigma_2\|_{\mathrm{op}}\epsilon \nonumber\\
		& + \frac{\sqrt{6}}{2}\|\s_1^{-1}\|_{\mathrm{op}}\|\s_2^{-1}\|_{\mathrm{op}}\|\Sigma_2\|_{\mathrm{op}} \|\mu_1\|^2\sqrt{\epsilon}\nonumber\\
		=& (\frac{3}{2}+\|\s_2^{-1}\|_{\mathrm{op}}\|\Sigma_2\|_{\mathrm{op}})\e + (\sqrt{2}\|\Sigma_2^{1/2}\|_{\mathrm{op}}\|\s_2^{-1}\|_{\mathrm{op}}\|\mu_2\| \\ 
		& +\frac{1}{2}\sqrt{6}\|\s_1^{-1}\|_{\mathrm{op}}\|\s_2^{-1}\|_{\mathrm{op}}\|\Sigma_2\|_{\mathrm{op}} \|\mu_1\|^2 + \frac{\sqrt{6d}}{2})\sqrt{\epsilon}\nonumber\\
		& + O(\e^{3/2})\nonumber\\
		=& F_1\sqrt{\e}+F_2\e + O(\e^{3/2})\label{eq:bound-a-K1K2K3}\\
		%& = O(\sqrt{d\e})+ O(\e^{3/2})
		=& O(\sqrt{\e})
	\end{align}
	
	\textbf{Step 2.2. Bounding $\|b\| \|\mathbb{E}_P\left[x\right]\|$.}
	
	%Since distribution $P$ has finite second moment, its first moment $\mathbb{E}_P\left[\|x\|\right]$ is also finite. 
	Since norm $\|\cdot\|$ is convex, it is easy to know 
	\begin{align}
		\|\mathbb{E}_P\left[x\right]\| \leq \mathbb{E}_P\left[\|x\|\right]	\label{eq:bound-EP}
	\end{align}
	
	Since function $g(x)=\sqrt{x}$ is concave, so 
	\begin{align}
		\mathbb{E}_P[\|x\|]=\mathbb{E}_P[\sqrt{\|x\|^2}]\leq 	\sqrt{\mathbb{E}_P[\|x\|^2]}
	\end{align}
	Combing these two inequalities, we can know $\|\mathbb{E}_P[x]\| \leq \sqrt{\mathbb{E}_P[\|x\|^2]}$. Since $P$ is assumed to have finite second moment, so $\|\mathbb{E}_P[x]\|$ is also finite.
	
	From the definition of $b$ (Equation \eqref{eq:define-a-b-c}), we can transform $b$ as follows.
	\begin{align}\nonumber
		b & = \Sigma_1^{-1}\mu_1-\Sigma_2^{-1}\mu_2 \nonumber\\
		& = \Sigma_1^{-1}\mu_1-\s_2^{-1}\mu_1+\s_2^{-1}\mu_1-\Sigma_2^{-1}\mu_2 \nonumber\\
		& = (\Sigma_1^{-1}-\s_2^{-1})\mu_1 + \s_2^{-1}(\mu_1-\mu_2) \nonumber\\
		& = (\Sigma_1^{-1}-\s_2^{-1})(\mu_2-\Delta\mu) + \s_2^{-1}(\mu_1-\mu_2) \nonumber\\
		& = (\Sigma_1^{-1}-\s_2^{-1})\mu_2 - (\Sigma_1^{-1}-\s_2^{-1})\Delta\mu+ \s_2^{-1}(\mu_1-\mu_2) \nonumber
	\end{align}
	Therefore, we obtain the following bound
	\begin{align}
		\|b\|  \leq & \underbrace{\|(\Sigma_1^{-1}-\s_2^{-1})\mu_2\|}_{T_3} + \underbrace{\|(\Sigma_1^{-1}-\s_2^{-1})\Delta\mu\|}_{T_4} \\
		&+ \underbrace{\|\s_2^{-1}(\mu_1-\mu_2)\|}_{T_5}\label{eq:bound-b-T3T4T5}
	\end{align}
	In the following, we show these three parts $T_3$, $T_4$, and $T_5$ are all bounded using Lemma \ref{lem:bound-deltamu-deltasigma}.
	\begin{align}
		T_3 & = \|(\Sigma_1^{-1}-\s_2^{-1})\mu_2\| \nonumber\\
		& \leq \|(\Sigma_1^{-1}-\s_2^{-1})\|_{\mathrm{op}}\|\mu_2\| \nonumber\\
		& \leq \|(\Sigma_1^{-1}-\s_2^{-1})\|_{F}\|\mu_2\| \nonumber\\
		%		& = \|\s_2^{-1}(\s_2-\s_1)\s_1^{-1}\|_{\mathrm{op}}\|\mu_2\| \nonumber\\
		%		& \leq \|\s_2^{-1}\|_{\mathrm{op}}\|\s_1^{-1}\|_{\mathrm{op}}\|\s_2-\s_1\|_F\|\mu_2\| \label{eq:bound-T3-step}\\
		& \leq  \|\s_2^{-1}\|_{\mathrm{op}}\|\s_1^{-1}\|_{\mathrm{op}}\|\Sigma_2\|_{\mathrm{op}}\sqrt{6\epsilon}\|\mu_2\|  \\
		& = \sqrt{6}\|\s_2^{-1}\|_{\mathrm{op}}\|\s_1^{-1}\|_{\mathrm{op}}\|\Sigma_2\|_{\mathrm{op}}\|\mu_2\|\sqrt{\epsilon} \label{eq:boundT3}
	\end{align}
	%where Inequality \eqref{eq:bound-T3-step} follows from $\|ABC\|_F\leq \|A\|_{\mathrm{op}}\|B\|_F\|C\|_{\mathrm{op}}$ for any three matrices \cite{bhatia2013matrix}, and Inequality follows from Lemma \ref{lem:bound-deltamu-deltasigma}.
	
	Similarly, $T_4$ can be bounded as follows.
	\begin{align}
		T_4 & = \|(\Sigma_1^{-1}-\s_2^{-1})\Delta\mu\|\nonumber\\
		& \leq  \|\s_2^{-1}\|_{\mathrm{op}}\|\s_1^{-1}\|_{\mathrm{op}}\|\Sigma_2\|_{\mathrm{op}}\sqrt{6\epsilon}\|\Delta\mu\| \tag{by Inequality \ref{eq:boundT3}} \\
		& \leq \|\s_2^{-1}\|_{\mathrm{op}}\|\s_1^{-1}\|_{\mathrm{op}}\|\Sigma_2\|_{\mathrm{op}}\sqrt{6\epsilon}\|\Sigma_2^{1/2}\|_{\mathrm{op}}\sqrt{2\epsilon} \tag{by Lemma \ref{lem:bound-deltamu-deltasigma}} \\
		& = 2\sqrt{3} \|\s_2^{-1}\|_{\mathrm{op}}\|\s_1^{-1}\|_{\mathrm{op}}\|\Sigma_2\|^{3/2}_{\mathrm{op}}\epsilon \label{eq:boundT4}
	\end{align}
	
	Finally, the last part $T_5$ can be bounded by
	\begin{align}
		T_5 & = \|\s_2^{-1}(\mu_1-\mu_2)\| \nonumber\\
		& \leq \|\s_2^{-1}\|_{\mathrm{op}}\|(\mu_1-\mu_2)\| \nonumber\\
		& \leq \|\s_2^{-1}\|_{\mathrm{op}} \|\Sigma_2^{1/2}\|_{\mathrm{op}}\sqrt{2\epsilon}  \tag{by Lemma \ref{lem:bound-deltamu-deltasigma}} \\
		& =  \sqrt{2} \|\s_2^{-1}\|_{\mathrm{op}} \|\Sigma_2^{1/2}\|_{\mathrm{op}}\sqrt{\epsilon} \label{eq:boundT5}
	\end{align}
	
	%From the condition that $P$ has finite second moment $\mathbb{E}_P\left[  \|x\|^2  \right]$, we can know $\mathbb{E}_P\left[  \|x\|  \right]$ is also finite due to the following inequality.

	Now combining Inequalities \eqref{eq:bound-EP}, \eqref{eq:bound-b-T3T4T5}, \eqref{eq:boundT3}, \eqref{eq:boundT4}, and \eqref{eq:boundT5}, we obtain a bound for $\|b\|\|\mathbb{E}_P\left[x\right]\|$ as
	\begin{align}
		&\|b\|\|\mathbb{E}_P\left[x\right]\| \\
		\leq	& (\sqrt{6}\|\s_2^{-1}\|_{\mathrm{op}}\|\s_1^{-1}\|_{\mathrm{op}}\|\Sigma_2\|_{\mathrm{op}}\|\mu_2\| \\
		& + \sqrt{2} \|\s_2^{-1}\|_{\mathrm{op}} \|\Sigma_2^{1/2}\|_{\mathrm{op}})\mathbb{E}_P\left[\|x\|\right]\sqrt{\epsilon} \\
		& +2\sqrt{3} \|\s_2^{-1}\|_{\mathrm{op}}\|\s_1^{-1}\|_{\mathrm{op}}\|\Sigma_2\|^{3/2}_{\mathrm{op}}\mathbb{E}_P\left[\|x\|\right]\epsilon \nonumber\\
		= &F_3 \sqrt{\epsilon}  + F_4 \e \label{eq:bound-b}\\
		=& O(\sqrt{\epsilon})
	\end{align}
	
	\textbf{Step 2.3. Bounding $\left|\mathbb{E}_P\left[x^\top M x\right]\right|$.}
	
	The bound can be obtained in the similar way.
	\begin{align} 
		&\left|\mathbb{E}_P\left[x^\top M x\right]\right| \\
		= 	&	\left|\mathbb{E}_P\left[x^\top \frac{1}{2}(\Sigma_2^{-1}-\Sigma_1^{-1})  x\right]\right| \tag{by Equation \eqref{eq:define-a-b-c}} \\
		\leq	& \frac{1}{2}\left|\mathbb{E}_P\left[ \|(\Sigma_2^{-1}-\Sigma_1^{-1})\|_F \|x\|^2  \right]\right|\\
		\leq 	&\frac{1}{2}\|(\Sigma_2^{-1}-\Sigma_1^{-1})\|_F\left|\mathbb{E}_P\left[  \|x\|^2  \right]\right| \\
		\leq &\frac{1}{2}\|\s_1^{-1}\|_{\mathrm{op}}\|\Sigma_2\|_{\mathrm{op}}\|\s_2^{-1}\|_{\mathrm{op}}\sqrt{6\epsilon}\mathbb{E}_P\left[  \|x\|^2  \right] \tag{by Lemma \ref{lem:bound-deltamu-deltasigma}} \\
		=  	& F_5 \sqrt{\e} \\
		=& O(\sqrt{\e}) \label{eq:bound-c}
	\end{align}
	where the last but two equation follows from the condition of finite second moment of $P$.
	
	Finally, we can combine Inequalites \eqref{eq:bound-a-K1K2K3}, \eqref{eq:bound-b}, and \eqref{eq:bound-c} into \eqref{eq:bound-EP-logterm-abc} and obtain
	\begin{align}\label{eq:bound-EP-logterm}
		\begin{split}
			&\left|\mathbb{E}_P\left[\log \frac{\N_1(x)}{\N_2(x)}\right]\right| \\
			\leq &\left|a\right| +\|b\| \|\mathbb{E}_P\left[x\right]\|+\left|\mathbb{E}_P\left[x^\top M x\right]\right| \\
			\leq  & F_1\sqrt{\e}+F_2\e + O(\e^{3/2})  + F_3 \sqrt{\epsilon} \\
			&+ F_4 \e +F_5 \sqrt{\e}\\
			=& O(\sqrt{\e}) 
		\end{split}
	\end{align}
	
	Plugging this into Equality \eqref{eq:rewrite-KLP-N2}, we can obtain a bound for $KL(P||\mathcal{N}_2)$ 
	\begin{align}\nonumber
		KL(P||\mathcal{N}_2) & \ge KL(P||\N_1) - \left|\mathbb{E}_P\left[\log \frac{\N_1(x)}{\N_2(x)}\right]\right|\\
		& = C -O(\sqrt{\e}) 
	\end{align}
	
	%Note that, $K_6$ also contains the second moment $\mathbb{E}_P\left[  \|x\|^2  \right]$. In high-dimensional applications, it is common that $d$ is greater than $\mathbb{E}_P\left[  \|x\|^2  \right]$. In essence, the bound is $C-O(\sqrt{\e})$.% 
	
	This completes the proof.
	$\hfill\blacksquare$
\end{proof}

\begin{remark}
	The bound in Theorem \ref{thm:triangle-P-N1-N2} contains several constants $F_1\sim F_5$ that are related to the parameters of $\N_1$ and $\N_2$. In particular, $F_1$ is also dependent on $\sqrt{d}$.
	Although $d$ is not a small number in high dimensional problems,  its square-root grows slowly with $d$.
\end{remark}

In application, it is common that $\N_2$ is standard Gaussian distribution $\N(\bm{0},\bm{I})$. In such a case, we can obtain a bound independent of parameters of $\N_1$. This result is formalized in the following corollary.

\begin{corollary}\label{thm:triangle-P-N1-standardN2}
	Given a multivariate Gaussian distributions $\mathcal{N}_1(\mu_1,\Sigma_1)$ and an arbitrary distribution $P$ with finite second moment, if $KL(P||\N_1)>C$ and $KL(\N_1||\N(\bm{0},\bm{I})) \leq \epsilon$, where $C$ is a large constant and $\epsilon$ is a sufficiently small positive constant, then
	\begin{align}
		KL(P||\N_2)\ge C-O(\sqrt{\e})\nonumber
	\end{align}
	where the factor of $\sqrt{\e}$ only depends on the first and second moment of $P$. 
\end{corollary}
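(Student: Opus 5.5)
We specialize the proof of Theorem \ref{thm:triangle-P-N1-N2} to $\N_2=\N(\bm{0},\bm{I})$. The key observation is that the constants $F_1,\dots,F_5$ there depend on $\mu_1,\Sigma_1$ only through $\|\mu_1\|$ and $\|\Sigma_1^{-1}\|_{\mathrm{op}}$. Once $\N_2$ is standard, the hypothesis $KL(\N_1||\N(\bm{0},\bm{I}))\le\e$ bounds both quantities by absolute constants. The dependence on $\N_1$ therefore disappears, leaving only $d$ and the first and second moments of $P$.

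\textbf{Step 1 (decomposition).} As in Equation \eqref{eq:rewrite-KLP-N2}, we write
$$KL(P||\N_2)=KL(P||\N_1)+\mathbb{E}_P\left[\log\frac{\N_1(x)}{\N_2(x)}\right]=KL(P||\N_1)+a+b^\top\mathbb{E}_P[x]+\mathbb{E}_P[x^\top Mx].$$
With $\mu_2=\bm{0}$ and $\Sigma_2=\bm{I}$, the coefficients simplify to
$$a=-\tfrac12\log\det\Sigma_1-\tfrac12\mu_1^\top\Sigma_1^{-1}\mu_1,\qquad b=\Sigma_1^{-1}\mu_1,\qquad M=\tfrac12(\bm{I}-\Sigma_1^{-1}).$$
We now have $A=\Sigma_1$, $\|\Sigma_2\|_{\mathrm{op}}=\|\Sigma_2^{-1}\|_{\mathrm{op}}=1$, and every $T_{21}$- and $T_3$-type term involving $\mu_2$ vanishes.

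\textbf{Step 2 (controlling $\N_1$'s parameters by $\e$ alone).} Lemma \ref{lem:bound-deltamu-deltasigma} gives $\|\mu_1\|\le\sqrt{2\e}$ and $\|\Sigma_1-\bm{I}\|_F\le\sqrt{6\e}$. The eigenvalues therefore satisfy $\lambda_i\in[1-\sqrt{6\e},1+\sqrt{6\e}]\subset[0.5,1.5]$ for $\e<1/24$, which is also the range needed in that lemma. Consequently $\|\Sigma_1^{-1}\|_{\mathrm{op}}\le(1-\sqrt{6\e})^{-1}\le 2$.

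The main obstacle is that Lemma \ref{lem:bound-deltamu-deltasigma} tacitly assumes the eigenvalues lie in $[0.5,1.5]$. I would close this gap directly. Since $\sum_i f(\lambda_i)\le 2\e$ and $f(0.5),f(1.5)>0.09$, the convexity in Lemma \ref{lemma:f} forces every $\lambda_i$ into $[0.5,1.5]$ once $2\e<0.09$. This makes Step 2 rigorous.

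\textbf{Step 3 (bounding the terms).}
\begin{itemize}
\item By Lemma \ref{lemma:bound-sum-log-x}, $|\log\det\Sigma_1|\le -d\log(1-\sqrt{6\e/d})=O(\sqrt{d\e})$.
\item For the quadratic part of $a$, $|\mu_1^\top\Sigma_1^{-1}\mu_1|\le 2\e$ directly from the closed-form KL.
\item For the linear term, $\|b\|\le\|\Sigma_1^{-1}\|_{\mathrm{op}}\|\mu_1\|\le 2\sqrt{2\e}$, so $|b^\top\mathbb{E}_P[x]|\le 2\sqrt{2\e}\,\sqrt{\mathbb{E}_P\|x\|^2}$.
\item For the quadratic term, $\|\bm{I}-\Sigma_1^{-1}\|_F\le\|\Sigma_1^{-1}\|_{\mathrm{op}}\|\Sigma_1-\bm{I}\|_F\le 2\sqrt{6\e}$, so $|\mathbb{E}_P[x^\top Mx]|\le\sqrt{6\e}\,\mathbb{E}_P\|x\|^2$.
\end{itemize}

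\textbf{Conclusion.} Summing the four bounds gives
$$KL(P||\N_2)\ge C-\left(c_1\sqrt{d}+c_2\sqrt{\mathbb{E}_P\|x\|^2}+c_3\,\mathbb{E}_P\|x\|^2\right)\sqrt{\e}-O(\e),$$
with absolute constants $c_1,c_2,c_3$. The bracketed factor depends only on $d$ and the moments of $P$, as claimed.
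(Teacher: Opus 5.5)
Your proposal is correct and follows essentially the paper's route: specialize the decomposition of Theorem~\ref{thm:triangle-P-N1-N2} to $\mu_2=\bm{0}$, $\Sigma_2=\bm{I}$, then use Lemma~\ref{lem:bound-deltamu-deltasigma} to bound $\|\mu_1\|$ and $\|\Sigma_1^{-1}\|_{\mathrm{op}}$ by absolute constants. You also improve on the paper in two places: you justify the eigenvalue range $[0.5,1.5]$ that Lemma~\ref{lem:bound-deltamu-deltasigma} only assumes, and you keep the $\sqrt{d}$ factor that the paper's bound on $F_1$ silently replaces by $1$, so strictly the constant depends on $d$ as well as on the moments of $P$. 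One harmless slip: the closed-form KL gives $\|\mu_1\|^2\le 2\epsilon$, not $\mu_1^\top\Sigma_1^{-1}\mu_1\le 2\epsilon$, so that term should be bounded by $\|\Sigma_1^{-1}\|_{\mathrm{op}}\|\mu_1\|^2\le 4\epsilon$, which is still $O(\epsilon)$.
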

\begin{proof}
	From the condition $\N_2(\mu_2,\Sigma_2)=\N(\bm{\bm{0},\bm{I}})$, we know $\mu_2=0$, $\|\Sigma_2\|_{\mathrm{op}}=\|\Sigma^{1/2}_2\|_{\mathrm{op}}=\|\Sigma^{-1}_2\|_{\mathrm{op}}=1$. Plugging these conditions into the bound in Inequality \ref{eq:bound-EP-logterm}, constants $F_1\sim F_5$ are
	\begin{align}\label{eq:K1-K6}
		\begin{split}
			F_1 &=\frac{\sqrt{6}}{2}(\|\Sigma_1^{-1}\|_{\mathrm{op}}\|\mu_1\|^2+\sqrt{d}) \\
			F_2 &=\frac{5}{2} \\
			F_3 &= \sqrt{2} \mathbb{E}_P \left[\|x\|\right]\\
			F_4 &= 2\sqrt{3}\|\Sigma_1^{-1}\|_{\mathrm{op}}\mathbb{E}_{P}\left[\|x\|\right]\\
			F_5 &= \frac{\sqrt{6}}{2}\|\Sigma_1^{-1}\|_{\mathrm{op}}\mathbb{E}_{P}\left[\|x\|^2\right]
		\end{split}
	\end{align}
	
	Plugging $\mu_2=\bm{0}$ and $\Sigma_2=\bm{I}$ into Lemma \ref{lem:bound-deltamu-deltasigma} yields 
	\begin{align}
		\norm{\mu_1}\leq  \sqrt{2\epsilon} \label{eq:bound-mu1}\\
		\|\Sigma_1-\bm{I}\|_{F}\leq \sqrt{6\epsilon}
	\end{align}
	Let $\lambda_i$ be the eigenvalues of $\Sigma_1$ and $1/\lambda_i$ be the eigenvalues of $\Sigma_1^{-1}$. Let $\lambda_{min}$ be the minimal eigenvalue of $\Sigma_1$. It is easy to know $\|\Sigma_1^{-1}\|_{\mathrm{op}}=1/\lambda_{min}$. We have
	\begin{align}\nonumber
		& \|\Sigma_1-\bm{I}\|_{F}\leq \sqrt{6\epsilon} \nonumber\\ 
		\Longrightarrow & \sum_{i=1}^{n}{(\lambda_i-1)}^2 \leq 6\e  \nonumber\\
		\Longrightarrow & (\lambda_{min}-1)^2 \leq 6\e   \nonumber\\
		\Longrightarrow & 	\lambda_{min} \geq 1-\sqrt{6\e}  \nonumber\\
		\Longrightarrow & \|\Sigma_1^{-1}\|_{\mathrm{op}}=1/\lambda_{min} \leq \frac{1}{1-\sqrt{6\e}} \label{eq:bound-S-1op}
	\end{align}
	
	Plugging Inequalities \eqref{eq:bound-mu1} and \eqref{eq:bound-S-1op} into Equalities \eqref{eq:K1-K6}, we can know 
	\begin{align}\label{eq:bound-K2K5K6}
		\begin{split}
			F_1 &\leq \frac{\sqrt{6}}{2}\left(\frac{2\e}{1-\sqrt{6\e}}+1\right)\\
			F_4 &\leq \frac{2\sqrt{3}}{1-\sqrt{6\e}}\mathbb{E}_P\left[\|x\|\right]\\
			F_5 &\leq \frac{\sqrt{6}}{2(1-\sqrt{6\e})} \mathbb{E}_P\left[\|x\|^2\right]
		\end{split}
	\end{align}
	Combining Inequalities \eqref{eq:bound-EP-logterm}, \eqref{eq:K1-K6}, and \eqref{eq:bound-K2K5K6}, we can get a bound independent of parameters of $\N_1$ as follows.
	\begin{align}
		&KL(P||\mathcal{N}_2)\nonumber\\
		\ge &KL(P||\N_1) - \left|\mathbb{E}_P\left[\log \frac{\N_1(x)}{\N_2(x)}\right]\right|\nonumber\\
		= & C - \left(\frac{\sqrt{6}}{2}\left(\frac{2\e}{1-\sqrt{6\e}}+1\right) \sqrt{\e} + \frac{5}{2}\e  + O(\e^{3/2}) \right. \nonumber\\
		& \left. + \sqrt{2} \mathbb{E}_P \left[\|x\|\right]\sqrt{\e}
		+ \frac{2\sqrt{3}}{1-\sqrt{6\e}}\mathbb{E}_P\left[\|x\|\right]\e  \right.\\
		& \left. + \frac{\sqrt{6}}{2(1-\sqrt{6\e})} \mathbb{E}_P\left[\|x\|^2\right]\sqrt{\e} \right)\\
		= &C-O(\sqrt{\e})
	\end{align}
	where the factor of $\sqrt{\e}$ is only dependent of the first and second moment of $P$. 
	$\hfill\blacksquare$
\end{proof}

%In this paper, we apply the above conclusions on flow-based model where $P$ represents the distribution of representations of a given OOD dataset. In this setting, the second moment of $P$ is bounded by the model. 

\textbf{Tightness}.   In the following proposition, we show the bound in Theorem \ref{thm:triangle-P-N1-N2} cannot improved further.
\begin{proposition}[Tightness]\label{thm:tightness}
	The $\sqrt{\epsilon}$ dependence in Theorem \ref{thm:triangle-P-N1-N2} is tight in general.
\end{proposition}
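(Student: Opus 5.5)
To prove tightness, I will build an explicit one-dimensional Gaussian family that satisfies the hypotheses of Theorem \ref{thm:triangle-P-N1-N2} and for which the drop $KL(P||\N_1)-KL(P||\N_2)$ is of exact order $\sqrt{\e}$. Since the statement says the rate is sharp ``even within the Gaussian family'', I will take $P$ Gaussian as well; then $P$ trivially has finite second moment. The simplest choice is a mean shift:
\begin{align}
P=\N(m,1),\quad \N_1=\N(0,1),\quad \N_2=\N(\delta,1),\nonumber
\end{align}
with $\delta=\sqrt{2\e}$ and $m>0$ large. For unit-variance Gaussians $KL(\N(a,1)||\N(b,1))=(a-b)^2/2$. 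This gives $KL(\N_1||\N_2)=\delta^2/2=\e$, so the second hypothesis holds with equality. It also gives $KL(P||\N_1)=m^2/2$, which we set equal to $C$ (or slightly above it, letting the slack go to zero), so the first hypothesis holds for the prescribed large constant $C$.

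The key computation uses the decomposition \eqref{eq:rewrite-KLP-N2}. Here $\log\frac{\N_1(x)}{\N_2(x)}=-\delta x+\delta^2/2$, so
\begin{align}
KL(P||\N_2)=KL(P||\N_1)-\delta m+\frac{\delta^2}{2}=C-\sqrt{2C}\sqrt{2\e}+\e=C-2\sqrt{C\e}+\e.\nonumber
\end{align}
One can check this directly from $(m-\delta)^2/2$. Hence the deficit is $2\sqrt{C}\sqrt{\e}-\e$, which is $\Theta(\sqrt{\e})$ as $\e\to0$ with $C$ fixed. Therefore no bound of the form $C-g(\e)$ with $g(\e)=o(\sqrt{\e})$ can hold uniformly, even with the first two moments of $P$ and the parameters of $\N_1,\N_2$ fixed. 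This shows the $\sqrt{\e}$ rate cannot be improved.

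The main point needing care is making ``tight'' precise, because the constant in Theorem \ref{thm:triangle-P-N1-N2} is allowed to depend on the moments of $P$ and on the parameters. I would state it as follows: for fixed moment data (here $\mathbb{E}_P[x]=m$ and $\mathbb{E}_P[x^2]=m^2+1$), the quantity
\begin{align}
\sup\{KL(P||\N_1)-KL(P||\N_2)\},\nonumber
\end{align}
taken over $\N_2$ with $KL(\N_1||\N_2)\le\e$, is at least $c\sqrt{\e}$ with $c>0$. The example above gives exactly this with $c\approx2\sqrt{C}$. Optionally, I would remark that the same example embedded in $d$ dimensions (shifting along one coordinate) gives the same conclusion in any dimension.
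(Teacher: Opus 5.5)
Your proposal is correct and is essentially the paper's own proof. The paper uses the same mean-shift construction $\N_1=\N(0,I)$, $\N_2=\N(\delta e_1,I)$, $P=\N(te_1,I)$ with $\delta=\sqrt{2\e}$ and $t^2/2=C$, of which your example is the one-dimensional case. It obtains $KL(P||\N_2)=C-t\sqrt{2\e}+\e$, which is exactly your $C-2\sqrt{C\e}+\e$. Your statement of tightness with the moments of $P$ and the parameters held fixed makes precise what the paper only states informally.
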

\begin{proof}

Consider $\mathcal{N}_1 = \mathcal{N}(0,I)$ and $\mathcal{N}_2 = \mathcal{N}(\delta e_1, I)$ where $\delta>0$ is a small number and $e_1=(1,0,\dots,0)$ is the standard basis vector.
Suppose 
$$KL(\mathcal{N}_1||\mathcal{N}_2)=\frac{\delta^2}{2}=\epsilon\ \text{and}\ \delta = \sqrt{2\epsilon}$$

Let $P = \mathcal{N}(t e_1, I)$ for some constant $t > 0$. 
Suppose $KL(P||\N_1)=\frac{t^2}{2}=C$. 
Then
\[
KL(P||\mathcal{N}_2)
= \frac{t^2}{2}-t\delta + \frac{\delta^2}{2}=C-t\sqrt{2\epsilon} + \epsilon=C-\Theta(\sqrt{\epsilon})
\]
This shows that the $\sqrt{\epsilon}$ dependence in the bound can be attained. Therefore, the lower bound cannot be improved in general.
The result characterizes the optimal worst-case sensitivity of KL divergence under Gaussian perturbations.
	$\hfill\blacksquare$
\end{proof}
\textbf{Discussion.}
Figure \ref{fig:small_picture} illustrates the KL divergences between $\N_1$, $\N_2$, and $P$. Note that, the directions of KL divergences in Figure \ref{fig:small_picture} are different from those in Figure \ref{fig:big_picture}.

The tightness of the bound shows that even when all distributions are Gaussian, one can construct examples where the difference 
$KL(P||\mathcal{N}_2) - KL(P||\mathcal{N}_1)$ scales as $\Theta(\sqrt{\epsilon})$.
Thus, the $\sqrt{\epsilon}$ dependence does not arise from the non-Gaussian extension, 
but already appears in the fully Gaussian setting.
This indicates that the $\sqrt{\epsilon}$ term is intrinsic to the geometry of KL divergence.

The underlying reason is a mismatch in sensitivity.
The KL divergence between Gaussian distributions scales quadratically with respect to small perturbations 
(\textit{i.e.}, $\epsilon = \Theta(\delta^2)$), 
while the KL divergence $KL(P||\cdot)$ varies linearly with the perturbation (\textit{i.e.}, $\Theta(\delta)$).
Combining these yields the $\Theta(\sqrt{\epsilon})$ scaling.

\section{Applications}\label{application}

In this section, we discuss how the theorectical result can be applied in the motivating research problem. Then we suggest more applications in reinforcement learning and deep learning.

\subsection{Solidifying Flow-Based OOD Detection}

\begin{figure}[t]
	\centering
	\includegraphics[width=8cm]{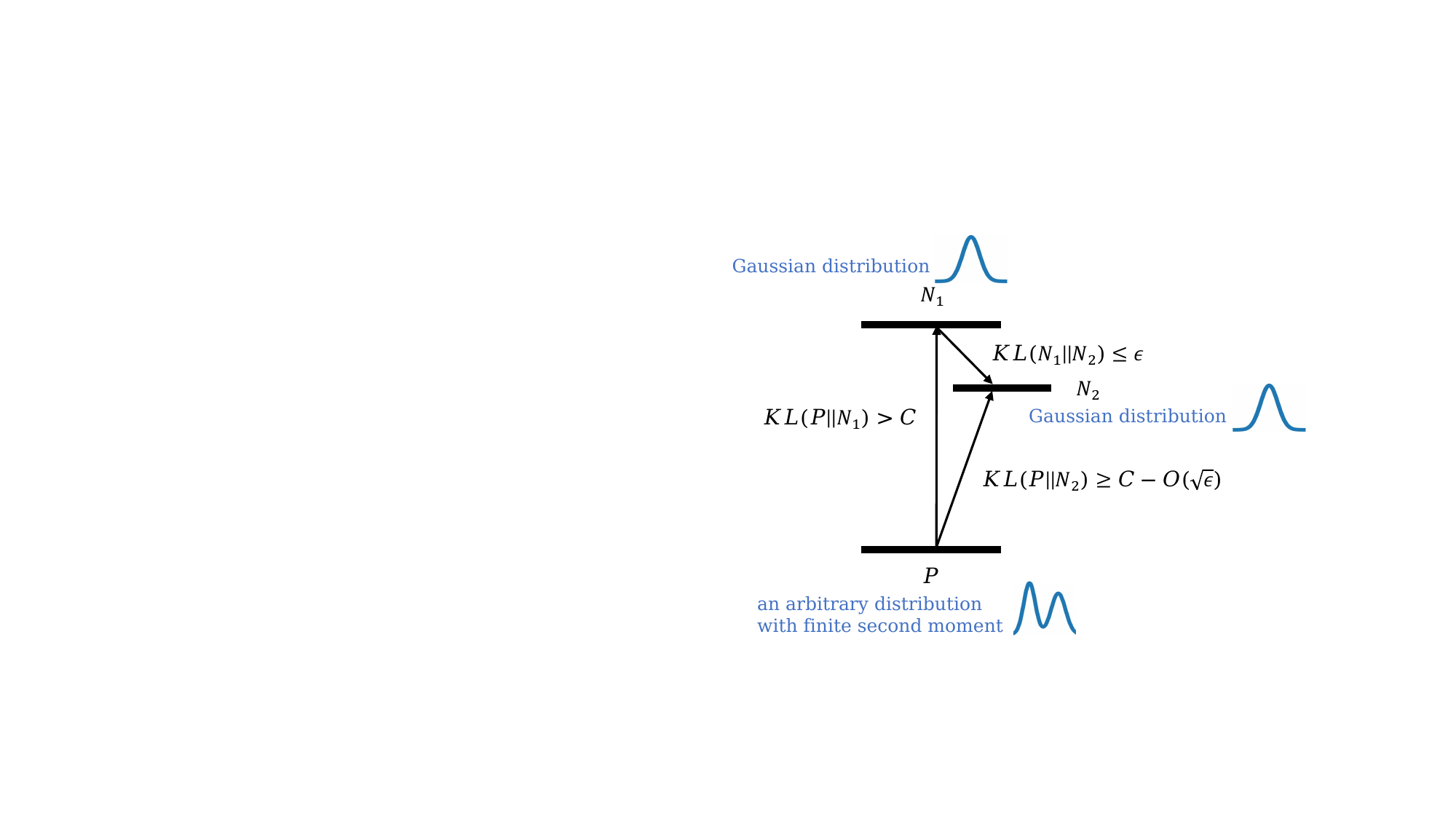}
	%\vspace{-10pt}
	\caption{KL divergence between an arbitrary distribution $P$ and two Gaussian distributions $\N_1$ and $\N_2$. Arrows represents KL divergences. Note that the directions of KL divergence in this figure are different from Figure \ref{fig:big_picture}.}
	\label{fig:small_picture}
\end{figure}

In Section \ref{problem}, we have analyzed the research problem related to OOD detection with flow-based models. 
Existing work \cite{zhang2023outofdistribution, zhang2023properties} do not present ideal theoretical analysis. This is because for a wide range of OOD problems the representations of OOD dataset do not follow Gaussian-like distributions. Existing theoretical results on the properties of KL divergence between Gaussians \cite{zhang2023properties} are limited to multiple Gaussian distributions. Consequently,  Zhang \textit{et al.} use a strong assumption $P_Z\approx P_Z^r$, where $P_Z$ is the distribution of ID representations and $P_Z^r$ is the prior in \cite{zhang2023outofdistribution}.

Theorem \ref{thm:triangle-P-N1-N2} can be interpreted as a generalized relaxed triangle inequality among $\N_1$, $\N_2$, and an arbitray $P$ with a finite second moment. 
%It also characterized the stability of KL divergence under Gaussian perturbation.
Now we can establish the following proposition, which summarizes the core idea of applying Theorem \ref{thm:triangle-P-N1-N2} to OOD detection based on flow-based models.
\begin{proposition}\label{thm:forflow}
	Given a well-trained flow-based model $z=f(x)$ with Gaussian prior $P^r_Z$. Let $X_1\sim P_X$ and $X_2\sim Q_X$ denote the distributions of training in-distribution  data set and an out-of-distribution data set, respectively. $Z_1=f(X_1)\sim P_Z$ and $Z_2=f(X_2)\sim Q_Z$ denote the distributions of representations of ID and OOD data, respectively. Suppose both $P_X$ and $Q_X$ have finite second moments. If $KL(Q_X||P_X)$ is sufficiently large, then $KL(Q_Z||P^r_Z)$ is also large.
\end{proposition}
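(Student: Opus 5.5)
The plan is to reduce Proposition \ref{thm:forflow} to Theorem \ref{thm:triangle-P-N1-N2}, taking $P=Q_Z$, $\N_2=P^r_Z$, and letting $\N_1$ be a Gaussian approximation of the ID representation distribution $P_Z$. The argument transfers quantities from data space to latent space, identifies the three distributions of the theorem, and then invokes it.

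\textbf{Step 1: transfer to latent space.} Because $f$ is a diffeomorphism, KL divergence is invariant under $f$, so $KL(Q_Z||P_Z)=KL(Q_X||P_X)$, and this quantity is large by hypothesis. Training by maximum likelihood makes $KL(P_X||P^r_X)=KL(P_Z||P^r_Z)$ small for a well-trained model. The finite second moment assumption has to be carried into latent space. I would impose it directly as the reading of ``well-trained'', that is, assume $f$ has at most linear growth so that $\mathbb{E}\|f(X)\|^2<\infty$. This is the mild modelling assumption the paper already uses informally.

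\textbf{Step 2: identify the Gaussians.} Let $\N_1$ be the Gaussian that matches the first two moments of $P_Z$. Following the normality evidence, I would treat $P_Z\approx\N_1$, or more honestly assume $KL(Q_Z||\N_1)$ is still larger than some $C$, where $C$ is the large quantity from Step 1 minus a bounded correction.

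To justify this I would use the same decomposition as Step 1 of the proof of Theorem \ref{thm:triangle-P-N1-N2}:
\[
KL(Q_Z||\N_1)=KL(Q_Z||P_Z)+\mathbb{E}_{Q_Z}\!\left[\log\frac{P_Z}{\N_1}\right].
\]
The correction term is controlled by the approximate Gaussianity of $P_Z$. Next, the forward condition $KL(P_Z||P^r_Z)\le\e$ has to become $KL(\N_1||\N_2)\le\e$. This step uses the fact that the moment-matched Gaussian is the information projection: for Gaussian $P^r_Z$,
\[
KL(P_Z||P^r_Z)=KL(P_Z||\N_1)+KL(\N_1||P^r_Z),
\]
which is a Pythagorean identity for exponential families. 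Hence $KL(\N_1||P^r_Z)\le KL(P_Z||P^r_Z)\le\e$, and this is exactly the hypothesis the theorem needs.

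\textbf{Step 3: apply the theorem.} Theorem \ref{thm:triangle-P-N1-N2}, or Corollary \ref{thm:triangle-P-N1-standardN2} when the prior is standard, now gives $KL(Q_Z||P^r_Z)\ge C-O(\sqrt{\e})$, which is large. The main obstacle is Step 2: passing from the non-Gaussian $P_Z$ to $\N_1$ in the first argument of $KL(Q_Z||\cdot)$. The Pythagorean identity handles the $\e$ side cleanly. The $C$ side, however, requires bounding $\mathbb{E}_{Q_Z}[\log P_Z/\N_1]$, which has no control in general. I would state it as an explicit assumption that $P_Z$ is approximately Gaussian, justified by the normality tests.
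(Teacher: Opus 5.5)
Your reduction is the paper's. KL invariance under the flow gives $KL(Q_Z||P_Z)=KL(Q_X||P_X)>C$. Maximum-likelihood training gives $KL(P_Z||P^r_Z)\le\epsilon$. A growth condition on $f$ carries the second moment into latent space: the paper derives $\mathbb{E}\|Z_2\|^2\le 2\|f(0)\|^2+2L^2\mathbb{E}\|X_2\|^2$ from layerwise Lipschitz constants, which amounts to your linear-growth assumption. Theorem~\ref{thm:triangle-P-N1-N2} is then invoked with $P=Q_Z$ and $\mathcal{N}_2=P^r_Z$.

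You differ in what fills the $\mathcal{N}_1$ slot. The paper simply puts $P_Z$ there, tacitly treating it as Gaussian on the strength of the normality tests recalled in Section~\ref{problem}. You take the moment-matched Gaussian instead. Your Pythagorean identity is correct: $\log(\mathcal{N}_1/P^r_Z)$ is a quadratic polynomial in $z$, and $P_Z$ and $\mathcal{N}_1$ share first and second moments, so $\mathbb{E}_{P_Z}[\log(\mathcal{N}_1/P^r_Z)]=KL(\mathcal{N}_1||P^r_Z)$. This yields the $\epsilon$-side hypothesis of the theorem rigorously. Your route also uses the finite second moment of $P_X$, which the paper's proof never uses, to make $\mathcal{N}_1$ well defined.

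The remaining requirement, $KL(Q_Z||\mathcal{N}_1)\gtrsim C$, is exactly the Gaussianity of $P_Z$ that the paper assumes silently. You are right that it cannot be derived. Take $Q_Z=P^r_Z$, and let $P_Z$ have density proportional to $\eta\,p^r$ on a set $A$ with $P^r_Z(A)=\delta$ and to $p^r$ off $A$; pull both back by $f^{-1}$ to get $Q_X$ and $P_X$. Then $KL(P_Z||P^r_Z)\le\log\frac{1}{1-\delta}$ is small, and $KL(Q_Z||P_Z)=\log(1-\delta+\eta\delta)+\delta\log\frac{1}{\eta}\to\infty$ as $\eta\to 0$. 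Yet $KL(Q_Z||P^r_Z)=0$.

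So your proposal is at least as complete as the paper's proof. It also correctly isolates the one extra hypothesis the proposition genuinely needs: replacing $P_Z$ by a Gaussian surrogate in the second argument of $KL(Q_Z||\cdot)$ must not destroy the large divergence.
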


\begin{proof}
	By Theorem \ref{thm:triangle-P-N1-N2}, it is not hard to prove the proposition. Flow-based models define diffeomorphisms that preserve KL divergence \cite{flow-tpami-2021}. So $KL(Q_X||P_X)=KL(Q_Z||P_Z)$. According to the condition, we can know both of $KL(Q_X||P_X)$ and $KL(Q_Z||P_Z)$ are large enough. We can assume there is a large number $C$ such that $KL(Q_Z||P_Z)>C$. 
	
	Model $z=f(x)$ is a well-trained flow-based model. The training process essentially minimizes the KL divergence 
	$$KL(P_X||P_X^r)=KL(P_Z||P^r_Z)$$
	where $P_X^r$ is the distribution induced by the model $f$.
	Consequently,  we may assume
	$$KL(P_Z||P^r_Z)\leq \epsilon$$
	for some sufficiently small $\epsilon>0$.
	
	Now we show $Z_2=f(X_2)\sim Q_Z$ has finite second moment.
	Considering the architecture of flow-based model $f(x)=f_n\circ f_{n-1} \cdots f_1(x)$, each layer of the network (\textit{e.g.}, linear layer, activation function, convolution) has finite Lipschitz constant. So we can suppose each $f_i$ has a finite Lipschitz constant $C_i$. The whole network is Lipschitz with constant $C=\prod_{i=1}^{n}C_i$.
	
	Using the Lipschitz property, we have

	\begin{align}
		&\|f(X_2)-f(0)\|\leq C\|X_2\| \nonumber\\ 
		\Longrightarrow & \|f(X_2)\|\leq \|f(0)\|+C\|X_2\| \nonumber\\ 
		\Longrightarrow & \|f(X_2)\|^2\leq (\|f(0)\|+C\|X_2\|)^2  \nonumber\\ 
		\Longrightarrow & \|f(X_2)\|^2 \leq 2\|f(0)\|^2+2C^2\|X_2\|^2  \nonumber\\ 
		\Longrightarrow & \mathbb{E}[\|Z_2\|^2] \leq 2\|f(0)\|^2+2C^2\mathbb{E}[\|X_2\|^2] \label{eq:finiteZ}
	\end{align}

	Since inputs of flow-based models are typically normalized to a bounded range (\textit{e.g.}, $[-1, 1]$), it is reasonable to assume the input data has a finite second moment $\mathbb{E}[\|X_2\|^2]$.
	Consequently, Inequality \eqref{eq:finiteZ} implies that the latent distribution $Q_Z$ has a finite second moment $\mathbb{E}[\|Z_2\|^2]$.

	Now according to Theorem \ref{thm:triangle-P-N1-N2}, we obtain $KL(Q_Z||P^r_Z)\ge C-O(\sqrt{\e})$. This demonstrates the KL divergence between the distribution of OOD representations and the prior is large enough. 
	$\hfill\blacksquare$
\end{proof}

In summary, existing work \cite{zhang2023outofdistribution} addresses the general case under the strong assumption $P_Z\approx P_Z^r$, thereby neglecting model fitting error. In contrast, the theoretical analysis in this work removes this strong assumption and eliminate the assumption that the distribution $P$ of OOD representations is Gaussian.
Proposition \ref{thm:forflow} shows that the KL divergence remains sufficiently large regardless of whether the OOD representations follow a Gaussian-like distribution. This provides a general explanation for why sampling from the prior of a flow-based model fails to generate OOD-like data, and offers a stronger theoretical foundation for OOD detection methods based on the KL divergence analysis \cite{zhang2023outofdistribution}. Therefore, we provide the first rigorous justification of KL-based OOD detection without Gaussian assumption of OOD representation.

\subsection{Other Applications}

\textbf{Robust Guarantee in Reinforcement Learning}.
Policy regularization is a central component in modern reinforcement learning (RL), especially in algorithms such as Trust Region Policy Optimization (TRPO) \cite{TRPO2015} and Proximal Policy Optimization (PPO) \cite{PPO2017}, where updates are constrained via KL divergence. In continuous control settings, policies are commonly parameterized as multivariate Gaussian.
$$\pi(a|s)=\mathcal{N}(\mu_{\theta}(s),\Sigma_{\theta}(s))$$
A typical update enforces
$$KL(\pi_{\mathrm{new}}||\pi_{\mathrm{old}})<\e$$
where $\pi_{\mathrm{old}}$ denotes old policy, $\pi_{\mathrm{new}}$ denotes a slightly updated version of the old policy.
Suppose $\pi_{\mathrm{old}}=\mathcal{N}_1$ and $\pi_{\mathrm{new}}=\mathcal{N}_2$ are both Gaussian policies. 
Such policy regularization ensures that the new policy does not deviate excessively from the previous one.

Let $P$ denotes an arbitrary action distribution, \textit{e.g.}, induced by exploratory behavior, an off-policy dataset, or an adversarial perturbation. 
Suppose $KL(P||\N_1)>C$, meaning $P$ is well-separated from the current policy. $KL(\N_1||\N_2)\leq \e$, \textit{i.e.}, the policy is updated with a small step. By Theorem \ref{thm:triangle-P-N1-N2}, we obtain $KL(P||\N_2)\ge C-O(\sqrt{\e})$, indicating that the separation is preserved up to a small degradation $O(\sqrt{\e})$.
Therefore, unsafe regions remain separated from the policy after updates, preventing accidental drift toward them.

In summary, Theorem \ref{thm:triangle-P-N1-N2} guarantees robust separation between policies and external distributions.

\textbf{Implications for KL Regularization in VAE}.
Variational Autoencoders (VAEs) \cite{VAE2014} learn latent representations by optimizing a reconstruction objective regularized by a KL divergence term that pushes the latent distribution toward a simple prior, typically a standard Gaussian. 
$\beta$-VAE \cite{higgins2017beta} introduces a weighting parameter $\beta$ on this KL term as follows.
\begin{align}
	L_{\beta}=\mathbb{E}_{x\sim \mathcal{D}}\left[\mathbb{E}_{q(z|x)}\log p(x|z)\right]-\beta KL(q(z|x)||\mathcal{N}(\bm{0},\bm{I}))\nonumber
\end{align}
This provides explicit control over the trade-off between reconstruction quality and latent regularization. 
In practice, the learned latent distribution can be approximated by a Gaussian $\N_1$ that is close to prior with $KL(\N_1||\N(0,I))\leq \e$. However, the true latent distribution $P_Z$ induced by data may remain significantly non-Gaussian with $KL(P_Z||\N_1)>C$.  Applying our main result yields $KL(P_z||\N(\bm{0},\bm{I})))\ge C-O(\sqrt{\e})$.

This result demonstrates increasing KL regularization cannot eliminate a large intrinsic mismatch between the latent distribution and the Gaussian prior. The proposed KL stability result explains a fundamental limitation of KL regularization, which cannot overcome intrinsic non-Gaussian structure in the latent distribution. 

Other improvements including capacity control \cite{burgess2018understanding}, alternative divergences \cite{Tolstikhin2017WAE}, and expressive priors \cite{flow-tpami-2021}, can be understood as overcoming the intrinsic limitation of KL regularization toward a Gaussian prior.

\section{Discussion}\label{discuss}

To the best of our knowledge, existing relaxed triangle inequalities for KL divergence reported in \cite{zhang2023outofdistribution} and \cite{xiao2026relaxedtriangleinequalitykullbackleibler} are restricted to Gaussian distributions. In this paper, we involve an arbitrary distribution $P$ under a mild moment condition. Such generalization extends the application of KL divergence.

Pinsker's Inequality \cite{DBLP:books/cu/CsiszarK11} characterize the relation between KL divergence and total variantion distance as 
$$
\|P-Q\|_{TV}\leq KL(P||Q)
$$
for any distributions $P$ and $Q$ defined on $E$, where $\|P-Q\|_{TV}=\sup\limits_{A\in E}|P(A)-Q(A)|$ is the total variation distance. 
The theoretical result reported in this paper cannot be obtained directly from Pinsker's Inequality, since total variation control alone is insufficient to bound the expectation of log-density ratios. Our analysis instead exploits the quadratic structure of Gaussian likelihoods and requires only a finite second moment condition on $P$.

Beyond the specific settings discussed above, the theorem establishes a general robustness principle for KL divergence. Large divergence  from $P$ to a reference Gaussian distribution $\N_1$ is preserved under small perturbations from $\N_1$ to $\N_2$. This property has broad implications across multiple fields.

%In Information Theory \cite{cover1999elements}, the result implies that coding inefficiency due to model mismatch is stable. If a source distribution is poorly matched to a coding distribution, this mismatch persists under small perturbations, strengthening guarantees in universal coding and mismatched decoding.

Across all the domains discussed in this work, Theorem \ref{thm:triangle-P-N1-N2} provides a general robustness principle. KL divergence is stable over neighborhoods of a specific  reference Gaussian model. This makes KL divergence a powerful tool for transferring results, analyzing model misspecification, and understanding the intrinsic structure of probabilistic systems.

\section{Conclusion}\label{conclusion}

In this paper, we establish a relaxed triangle inequality for Kullback-Leibler (KL) divergence that connects an arbitrary distribution with Gaussian families. 
Specifically, we show that if $KL(P||\mathcal{N}_1) > C$ and $KL(\mathcal{N}_1||\mathcal{N}_2) \leq \epsilon$, then
$KL(P||\mathcal{N}_2)$ remains large, with a degradation of order $O(\sqrt{\epsilon})$. 
We further prove that this $\sqrt{\epsilon}$ dependence is optimal in general.
Our result generalizes existing relaxed triangle inequalities that are limited to fully Gaussian settings, and reveals an intrinsic stability property of KL divergence under Gaussian perturbations. 
This suggests that the geometry underlying KL divergence exhibits a robust behavior that extends beyond Gaussian families.
The proposed result provides a principled foundation for KL-based analysis methods in practical settings. 
In particular, it offers a rigorous explanation for counterintuitive behaviors observed in flow-based OOD detection, and justifies the use of KL divergence as a separation measure for OOD detection. 
Moreover, our result enables KL-based reasoning in non-Gaussian settings arising in deep generative modeling and reinforcement learning.

%An interesting direction for future work is to investigate whether similar stability phenomena hold for other f-divergences or more general information-theoretic distances, and to explore tighter geometric characterizations beyond Gaussian perturbations.

%{\appendices
%\section*{Proof of the First Zonklar Equation}
%Appendix one text goes here.
% You can choose not to have a title for an appendix if you want by leaving the argument blank
%\section*{Proof of the Second Zonklar Equation}
%Appendix two text goes here.}

\bibliographystyle{IEEEtran}
\bibliography{main}

\vspace{11pt}

\vfill

\end{document}